%% file: neurips_2026.tex
\documentclass{article}

    \PassOptionsToPackage{numbers, compress}{natbib}
\usepackage[preprint]{neurips_2026}

\usepackage[utf8]{inputenc} 
\usepackage[T1]{fontenc}    
\usepackage{hyperref}       
\usepackage{url}            
\usepackage{booktabs}       
\usepackage{amsfonts}       
\usepackage{nicefrac}       
\usepackage{microtype}      
\usepackage{xcolor}         

\usepackage{colortbl}
\usepackage[table]{xcolor}
\usepackage{array}
\usepackage{pifont}
\usepackage{xspace} 

\makeatletter
\DeclareRobustCommand\onedot{\futurelet\@let@token\@onedot}
\def\@onedot{\ifx\@let@token.\else.\null\fi\xspace}

\def\eg{\emph{e.g}\onedot}

\makeatother

\usepackage{amsmath}
\usepackage{amssymb}
\usepackage{mathtools}
\usepackage{amsthm}

\usepackage{enumitem}
\usepackage{multirow}
\usepackage{wrapfig}

\usepackage{graphicx}
\usepackage{subcaption} 

\usepackage[capitalize,noabbrev]{cleveref}

\theoremstyle{plain}
\newtheorem{theorem}{Theorem}[section]

\newtheorem{lemma}[theorem]{Lemma}

\theoremstyle{definition}

\newtheorem{assumption}[theorem]{Assumption}
\theoremstyle{remark}
\newtheorem{remark}[theorem]{Remark}

\newcommand{\PreserveBackslash}[1]{\let\temp=\\#1\let\\=\temp}
\newcolumntype{C}[1]{>{\PreserveBackslash\centering}p{#1}}
\newcolumntype{R}[1]{>{\PreserveBackslash\raggedleft}p{#1}}
\newcolumntype{L}[1]{>{\PreserveBackslash\raggedright}p{#1}}

\title{Alignment-Guided Flow Transformer for Efficient \\ Vision–Language–Action Policy Learning}

\author{%
  \bfseries Shengchao Hu\textsuperscript{1} \quad
  Peng Wang\textsuperscript{2} \quad
  Qiyang Zhou\textsuperscript{2} \quad
  Guodong Zheng\textsuperscript{2} \quad
  Yuqi Huang\textsuperscript{1} \\
  \bfseries Li Shen\textsuperscript{2} \quad
  Ya Zhang\textsuperscript{1} \quad
  Dacheng Tao\textsuperscript{3} \\[2mm]
  \normalfont
  $^{1}$Shanghai Jiao Tong University \quad
  $^{2}$Shenzhen Campus of Sun Yat-sen University \\
  $^{3}$Nanyang Technological University \\[2mm]
  \ttfamily
  charles-hu@sjtu.edu.cn \quad mathshenli@gmail.com
}

\begin{document}

\maketitle

\input{sections/0_abstract}
\input{sections/1_intro}
\input{sections/2_ReW}
\input{sections/3_Method}
\input{sections/4_Exp}
\input{sections/5_Con}

\bibliography{ref}
\bibliographystyle{plainnat}


\newpage
\appendix

\input{sections/6_suppl}


\newpage
\input{checklist.tex}

\end{document}

%% file: sections/0_abstract.tex
\begin{abstract}
Recent advances in Vision–Language–Action (VLA) models point toward general-purpose robotic intelligence by unifying perception, instruction, and control.
Despite impressive progress, existing VLA models often adapt poorly due to \emph{tri-modal misalignment} among vision, language, and action, which weakens action grounding and hurts generalization and fine-tuning efficiency.
In this work, we present Alignment-Guided Flow Transformer (AGFT), a novel framework that explicitly enforces tri-modal alignment through a dedicated alignment loss, bridging the representational gap across modalities and enhancing task adaptation.
While prior research has predominantly emphasized bi-modal vision--language alignment, we systematically formalize and study tri-modal alignment in VLA models, and provide both ablations and analysis to isolate its role in improving adaptation and robustness.
To further accelerate deployment, we adopt a flow-matching objective, enabling substantially fewer inference steps than diffusion-based policies while maintaining accuracy.
Theoretically, we establish a quantitative connection between the tri-modal alignment gap and the optimization tightness of flow matching; empirically, experiments on the extensive benchmark show that AGFT achieves superior success rates and lower inference latency compared to SOTA baselines, underscoring tri-modal alignment as a key ingredient for scaling robust VLA manipulation.
\end{abstract}

%% file: sections/1_intro.tex
\section{Introduction}
\label{sec:intro}

The recent emergence of Vision–Language–Action (VLA) models \citep{brohan2022rt, zitkovich2023rt, kim2024openvla, black2024pi_0} has brought the robotics community a step closer to developing general-purpose manipulation policies capable of executing diverse tasks across varied environments. Inspired by the success of foundation models in large-scale vision–language learning \citep{radford2021learning, touvron2023llama, karamcheti2024prismatic}, VLA models extend these pre-trained multimodal architectures by attaching an action head that grounds perception and language understanding into motor control. This design allows VLAs to inherit the semantic reasoning, scene understanding, and instruction-following capabilities of visual–language models (VLMs), thereby enabling effective imitation learning at scale.

A typical VLA policy is constructed through a two-stage paradigm: (i) a pre-training phase where a Transformer-based architecture is exposed to large cross-embodiment datasets \citep{o2024open, khazatsky2024droid}, learning a general visuomotor prior across heterogeneous robots, and (ii) an adaptation phase wherein the model is fine-tuned on limited task-specific demonstrations to perform precise manipulation behaviors. Despite their promise, recent VLA architectures can remain fragile under task and environment distribution shifts, as the interdependence between visual perception, linguistic intent, and generated actions is often only weakly aligned. Subtle misalignment—such as inconsistent grounding between textual goals and corresponding visual cues—can cause incoherent or unstable behaviors, especially in long-horizon, compositional tasks.

\begin{wrapfigure}{r}{0.5\textwidth}
    \includegraphics[width=0.5\textwidth]
    {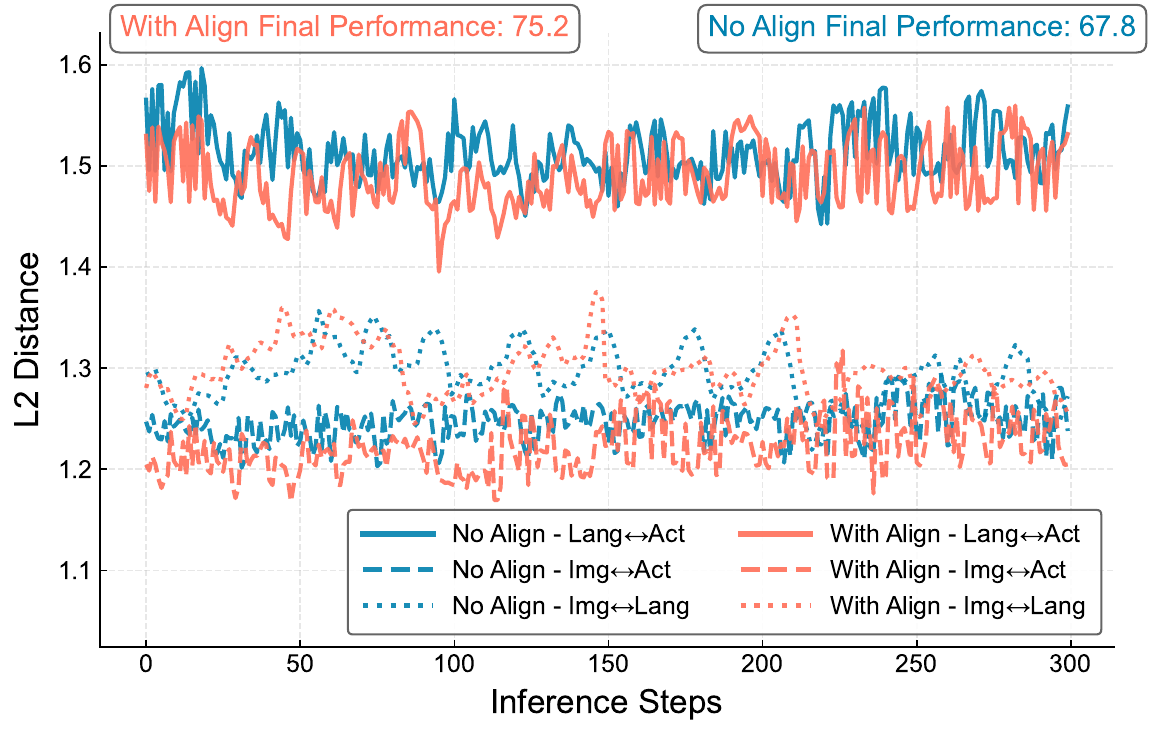} 
    \caption{The evolution of L2 distance between language–action, image–action, and image-language embeddings during the inference process, along with their corresponding task performance on the Libero‑long benchmark. Lower distance indicates a stronger consistency between modalities, which correlates with improved final evaluation performance. 
    }\label{fig:motivation}
\end{wrapfigure}

Existing research has largely focused on improving architectural scalability (e.g., autoregressive or diffusion-based policy heads) \citep{team2024octo, hou2025dita} or data efficiency (e.g., cross-task or cross-robot adaptation) \citep{kim2024openvla, pertsch2025fast}. However, the critical problem of multimodal alignment—that is, ensuring that visual observations, linguistic instructions, and continuous motor actions remain geometrically and semantically consistent within a shared representational space—has been significantly underexplored. Without explicit alignment, policies may learn spurious correlations in multimodal features, leading to degraded generalization, action ambiguity, and sensitivity to environmental changes \citep{zhang2025align}.
As VLA complexity scales with data and parameters, alignment—not just modeling capacity—becomes the true bottleneck for generalization.
As illustrated in Figure~\ref{fig:motivation}, a higher degree of cross-modal alignment exhibits a strong positive correlation with overall policy performance, underscoring the importance of alignment-aware learning objectives in scalable embodied systems.

Beyond alignment, most current policies depend on diffusion-based action heads to generate continuous trajectories \citep{chi2025diffusion, hou2025dita, qu2025spatialvla}. While diffusion models have proven successful in visual generative tasks, their iterative denoising process is computationally expensive and ill-suited to real-time control.
Moreover, the stochastic denoising process in diffusion can make it harder to maintain consistent cross-modal representations during generation, since it relies on repeatedly refining fused embeddings \citep{zhang2025align}.
Consequently, jointly attaining representational integrity and computational efficiency remains an open challenge for large-scale embodied policy learning and warrants deeper investigation.

To address these limitations, we propose the Alignment-Guided Flow Transformer (AGFT), a flow-based VLA policy that couples deterministic flow matching with explicit \emph{action-centered tri-modal alignment} for efficient and coherent action generation.
AGFT models the transport field between latent noise and target actions via a flow-matching objective, enabling fast and stable inference without the costly iterative sampling of diffusion. 
Simultaneously, AGFT enforces semantic consistency across modalities using contrastive losses among image, language and action, ensuring that linguistic intent and visual context are both faithfully reflected in motor outputs. 
Together, these objectives promote a harmonized vision-language-action representation that remains aligned throughout perception, reasoning, and execution.
We further provide theoretical analysis showing how alignment regularization tightens the optimization bounds of the flow-matching objective.
Empirically, on the LIBERO \citep{liu2023libero} and CALVIN \citep{mees2022calvin} benchmarks, AGFT improves success rates and inference efficiency over diffusion-based DiT fine-tuning baselines while remaining competitive with strong VLA baselines.
Finally, ablation studies isolate the contribution of the contrastive loss and corroborate its effectiveness.


Our contributions are summarized as follows:
\begin{itemize}[leftmargin=10pt]
    \item \textbf{Alignment-Guided Flow Transformer (AGFT):} We propose a novel flow-based vision–language–action architecture that eliminates the need for iterative diffusion sampling through deterministic flow matching, enabling fast and high-fidelity action generation.
    \item \textbf{Action-centered tri-modal alignment for VLA:} We identify \emph{tri-modal misalignment} as a VLA-specific bottleneck and introduce an explicit, contrastive alignment objective that primarily aligns vision-action and language-action representations (with vision-language as an auxiliary constraint), improving robustness under task and environment shifts.
    \item \textbf{Theory linking alignment to flow matching:} We provide formal upper and lower bounds for the flow-matching objective and show how reducing the tri-modal alignment gap tightens these bounds, offering theoretical support for alignment regularization.
    \item \textbf{Comprehensive empirical validation:} Extensive experiments on the LIBERO and CALVIN benchmark demonstrate that AGFT remains competitive with strong VLA baselines, delivering enhanced multimodal coherence and significantly faster inference.
\end{itemize}

\begin{figure*}
    \centering
    \includegraphics[width=1.0\linewidth]{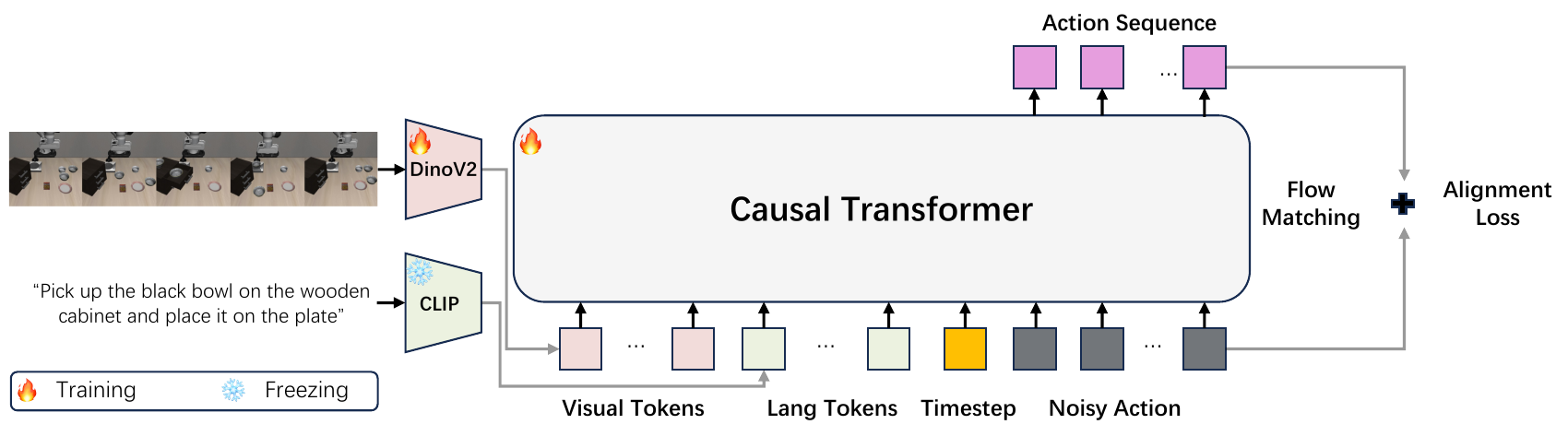}
    \vspace{-.6cm}
    \caption{\textbf{Overview of the proposed AGFT architecture.} The model adopts a Transformer-based policy design that integrates a pretrained CLIP encoder for language understanding and a DINOv2 backbone for visual representation learning. The extracted instruction tokens, image embeddings, timestep encodings, and noised action vectors are concatenated to form a unified token sequence, which is processed by the Transformer to predict clean actions. Training is jointly guided by a flow matching objective for efficient action generation and an alignment loss that enforces semantic consistency across vision, language, and action modalities. } 
    \label{fig:architecture}
    \vspace{-.4cm}
\end{figure*}




%% file: sections/2_ReW.tex
\vspace{-.2cm}
\section{Related Work}
\label{sec:ReW}

\subsection{Vision-Language-Action Models}

VLA models extend pre-trained VLMs by incorporating action heads to generate robot controls, enabling scalable policy learning from large heterogeneous data \citep{black2024pi_0, team2024gemma, chi2025diffusion, bjorck2025gr00t}. 
Early works such as RT-1 and RT-2 adopt a two-stage design in which the VLM first produces latent tokens that are subsequently mapped to discretized actions through an MLP decoder \citep{brohan2022rt, zitkovich2023rt}.
Subsequent approaches scale backbone architectures and vision encoders (\eg, OpenVLA) and explore latent action spaces to improve generalization (\eg, LAPA) \citep{kim2024openvla, ye2024latent, touvron2023llama, oquab2023dinov2, zhai2023sigmoid}. 
To enable high-frequency control and efficient deployment, token compression and action chunking techniques (\eg, $\pi_0$‑FAST, OpenVLA‑OFT) are introduced to reduce inference latency while preserving representational expressiveness \citep{pertsch2025fast, kim2025fine}. 
In parallel, other efforts model actions as continuous trajectories via diffusion or flow-matching objectives, supporting smooth and multimodal control at the expense of iterative inference or increased training complexity \citep{chi2025diffusion, liu2024rdt, hou2025dita}.

Recent research has converged on two key design axes: action representation and its integration with the backbone VLM \citep{ma2024survey, zhong2025survey}. 
Token-based autoregressive policies treat control as discrete sequence modeling \citep{brohan2022rt, chen2021decision}, benefiting from pretrained cross-modal features but facing serial decoding latency and error accumulation \citep{shridhar2023perceiver}. 
In contrast, trajectory-based methods with hierarchical or hybrid architectures (\eg $\pi_0$, DexVLA, HybridVLA) capture long-horizon dynamics more efficiently \citep{black2024pi_0, wen2025dexvla, liu2025hybridvla, qu2025spatialvla}.
However, misalignment among vision, language, and action modalities remains a core limitation, constraining adaptation to novel tasks and embodiments.
While alignment has been extensively explored in bi-modal vision–language models, its tri-modal extension is less understood. 
In this work, we aim to bridge this gap by introducing an alignment loss that explicitly enforces cross-modal consistency, enhancing both generalization and inference efficiency in VLA-based robotic manipulation.

\subsection{Generative Models for Policy Learning}

Recent progress in policy learning has increasingly cast decision making as a \emph{conditional sequence modeling} (CSM) problem \citep{DT, TT, GDT, QT, hu2026state}, leveraging Transformer architectures \citep{transformer}. In this paradigm, historical trajectories (e.g., state–action–reward tuples) are serialized as input sequences, and policies are obtained by predicting actions via goal-conditioned autoregressive decoding. This recasts offline reinforcement learning (RL) as supervised learning, alleviating optimization instabilities associated with bootstrapped value estimation \citep{srivastava2019training, kumar2019reward}. By conditioning on long temporal context rather than strict Markov structure, CSM policies can capture long-horizon dependencies and improve robustness across environments \citep{hu2024transforming}.

In parallel, generative modeling has become a central tool for policy learning, particularly diffusion models and flow-based methods. Diffusion models iteratively denoise latent variables to produce structured samples, and have been adopted in robotics to parameterize continuous, potentially multimodal action distributions \citep{hu2025analytic, hu2025solving}. Diffusion Policy \citep{chi2025diffusion} is a representative example, demonstrating strong expressivity in manipulation settings. More recently, flow matching has been proposed as an efficient alternative, offering smoother generation and faster training while maintaining competitive representational capacity \citep{chisari2024learning, zhang2024affordance}. These developments motivate our focus on flow-based formulations for VLA policy learning, aiming to improve both inference efficiency and cross-modal alignment.

%% file: sections/3_Method.tex
\section{Method}
\label{sec:med}

In this section, we introduce the AGFT: (i) a description of the architecture, instantiated as a scalable DiT with in-context conditioning; (ii) a definition of the objective used to train multi-modal action generation; and (iii) a specification of the inference pipeline.


\subsection{Architecture}

We build upon the Dita \citep{hou2025dita} architecture, which provides a simple yet flexible foundation for mapping multimodal inputs—language and visual observations—directly into continuous robot actions through noise-conditioned decoding. Dita receives both natural language instructions and third-person camera images as input. The language commands are tokenized using a frozen CLIP encoder \citep{radford2021learning}, while visual features are extracted through DINOv2 \citep{oquab2023dinov2}. Since DINOv2 is pre-trained on web-scale data rather than robot-specific observations, its parameters are jointly optimized with the policy network to better adapt to embodied data. To further improve efficiency, a Q-Former \citep{li2023blip} with FiLM-based conditioning \citep{perez2018film} selects context-relevant visual tokens by modulating DINOv2 features according to the instruction embedding. 
Each end-effector action is represented as a 7D vector composed of translation, rotation, and gripper components, and padded to match the dimensionality of multimodal tokens. During training, stochastic perturbations are applied only to these 7D action components, while language and visual tokens remain uncorrupted (details are provided in Appendix~\ref{appsec:model}).

As illustrated in Figure \ref{fig:architecture}, the model uses a Diffusion Transformer (DiT) backbone \citep{peebles2023scalable} as the sequence model for action-chunk generation.
Language, image, and temporal embeddings are concatenated within a causal Transformer structure, allowing action predictions to be directly conditioned on rich multimodal context.
Training optimizes a flow-matching objective in continuous action space: the Transformer parameterizes the conditional velocity (or flow) field that transports noisy action samples toward the data distribution. This design avoids auxiliary diffusion heads used in several prior multimodal policy models \citep{dasari2025ingredients,reuss2024multimodal,team2024octo,wen2025tinyvla}, while retaining the scalability of large Transformers and supporting straightforward extension to additional sensory modalities.

\subsection{Training Objective}

The proposed AGFT framework adopts the flow-matching formulation to enhance training efficiency and inference speed, while additionally incorporating a cross-modal alignment loss to strengthen consistency among visual, linguistic, and action representations.

Formally, the denoising network is parameterized as a causal Transformer $v_{\theta}(c_{\text{lang}}, c_{\text{img}}, t, z_t)$, where $c_{\text{lang}}$ and $c_{\text{img}}$ denote the language and visual embeddings, respectively. These embeddings are obtained from pre-trained encoders:
\begin{align}
    c_{\text{lang}} &= \mathrm{CLIP}(\text{language}), \\
    c_{\text{img}} &= \mathrm{DINOv2}(\text{image}),
\end{align}
and $t \sim \mathcal{U}(0,1)$ represents the continuous time step along the flow trajectory, while $z_t$ denotes the noisy action embedding at time $t \in [0,1]$.

Instead of performing iterative Gaussian denoising as in diffusion models, AGFT learns a deterministic vector field that directly transports samples from a source distribution $z_0 \sim \mathcal{N}(0, I)$ to the target action distribution $z_1 \sim D$ from the given trajectories. To simplify notation, we set \(\mathbb{E}[\cdot]:=\mathbb{E}_{\,z_0 \sim \mathcal{N}(0,I),\, z_1 \sim D,\, c_{\mathrm{img}},\, c_{\mathrm{lang}}}[\cdot].\)
 The corresponding flow-matching loss \citep{chisari2024learning} is defined as:
\begin{equation}
\label{eq:fm_loss}
L_{\mathrm{FM}}(\theta) = \mathbb{E} \Big[\lVert v_{\theta}(c_{\text{img}}, c_{\text{lang}}, t, z_t) - (z_1 - z_0) \rVert_2^2\Big],
\end{equation}
where $v_{\theta}$ predicts the instantaneous velocity along the data flow between $z_0$ and $z_1$. This formulation removes the need for stochastic denoising iterations, substantially accelerating inference while maintaining trajectory smoothness.

To mitigate the tri-modal misalignment inherent in VLA models, we introduce a contrastive alignment objective that encourages semantic coherence among action, visual, and linguistic representations. 
While $v_{\theta}$ governs the flow dynamics, the same Transformer backbone produces final-layer embeddings prior to the projection head:
\begin{align}
    e_{\text{act}} &= f_{\theta}(c_{\text{img}}, c_{\text{lang}}, t, z_t).
\end{align}
We employ a pairwise contrastive objective of the form:
\begin{align}
\label{eq:contraloss}
L_{\mathrm{cons}}(x_1, &x_2)
=  
Y \, D^2(x_1, x_2) + (1 - Y) \max(m - D(x_1, x_2), 0)^2,
\end{align}
where $D(x_1, x_2) = \lVert x_1 - x_2 \rVert_2$ measures the L2 distance between embeddings, $Y \in \{0,1\}$ indicates whether $(x_1, x_2)$ forms a semantically aligned positive pair, and $m > 0$ is a margin hyperparameter. 
We label a pair as positive ($Y=1$) if the two embeddings come from the \emph{same demonstration trajectory at the same time step}, and as negative ($Y=0$) otherwise.
This alignment loss is computed across triplet pairs: $(c_{\text{img}}, e_{\text{act}})$, $(c_{\text{lang}}, e_{\text{act}})$, and $(c_{\text{img}}, c_{\text{lang}})$, thereby enforcing cross-modal consistency among all modalities.
At each control timestep $t$, the token counts of the language, image, and action representations ($c_{\text{lang}}$, $c_{\text{img}}$, and $e_{\text{act}}$) generally differ. To achieve consistent cross-modal conditioning, we mean-pool the valid tokens of $c_{\text{lang}}$ and $c_{\text{img}}$ to obtain a per-timestep contextual vector, which is subsequently projected into the action embedding space. This produces a temporally aligned, fixed-dimensional conditioning signal that promotes geometric and semantic consistency between the multimodal context and the corresponding action representation.

The final optimization objective integrates the flow matching and alignment components:
\begin{align}
\label{eq:final_loss}
L_{final} = L_{FM}(\theta)
+ \alpha \big( L_{\mathrm{cons}}(c_{\text{lang}}, e_{\text{act}}) 
+ L_{\mathrm{cons}}(c_{\text{img}}, e_{\text{act}}) + L_{\mathrm{cons}}(c_{\text{img}}, c_{\text{lang}}) \big),
\end{align}
where $\alpha$ controls the relative contribution of the alignment term.  
Together, these objectives allow AGFT to produce action embeddings that are both efficient to infer and semantically grounded in accompanying visual and linguistic contexts, thereby improving generalization and embodiment adaptability in robotic manipulation tasks.

\subsection{Inference}

During inference, the AGFT performs deterministic trajectory generation by integrating along the learned flow field instead of relying on iterative stochastic denoising as in diffusion-based methods. Given the learned velocity predictor $v_{\theta}(c_{img}, c_{lang}, t, z_t)$, the action sequence is recovered by solving the following ordinary differential equation (ODE):
\begin{equation}
\frac{dz_t}{dt} = v_{\theta}(c_{img}, c_{lang}, t, z_t),
\end{equation}
where the latent state $z_t$ evolves from a base distribution $z_0 \sim \mathcal{N}(0, I)$ toward the target action $z_1$ under the policy's learned flow field. 

The deterministic nature of the flow process preserves trajectory smoothness and predictability, which are crucial for stable manipulation tasks. Conditioned jointly on visual and linguistic contexts, AGFT thus produces coherent and semantically consistent actions in real time, making it highly suitable for adaptive and efficient robotic policy deployment.


\section{Theoretical Support}
\label{sec:theoretic}

In this section, we provide theoretical justification for the proposed AGFT framework by deriving upper and lower bounds for the flow-matching loss, $L_{\mathrm{FM}}$.

To simplify notation, we set \(\Delta := z_1 - z_0\) and \(\mu := \tfrac13\big(c_{\mathrm{img}}+c_{\mathrm{lang}}+e_{\mathrm{act}}\big)\).
The symbol $\stackrel{d}{=}$ denotes equality in distribution. We also write \(v_\theta(c_{\mathrm{img}},c_{\mathrm{lang}},t,z_t)=h_\theta(e_{\mathrm{act}})\), where \(h_\theta\) denotes the final linear projection head in the Transformer architecture.
We define the average pairwise squared distance among \(c_{\mathrm{img}},c_{\mathrm{lang}},e_{\mathrm{act}}\) as follows:
\[
d_\mathrm{avg} \;:=\; \tfrac13\!\big(\|e_{\mathrm{act}}-c_{\mathrm{img}}\|_2^2
+\|e_{\mathrm{act}}-c_{\mathrm{lang}}\|_2^2
+\|c_{\mathrm{img}}-c_{\mathrm{lang}}\|_2^2\big).
\]

\begin{assumption}[$L_h$-Lipschitz]\label{Assumption: Li-Lipschitz}
There exists a constant $L_h>0$ such that for all $u,v$ in the domain of $h_\theta$,
\[
\|h_\theta(u)-h_\theta(v)\|_2 \;\le\; L_h\,\|u-v\|_2.
\]
\end{assumption}

\begin{assumption}[$m_h$-co-Lipschitz]\label{Assumption: co-Lipschitz}
There exists a constant $m_h>0$ such that for all $u,v$ in the domain of $h_\theta$,
\[
\|h_\theta(u)-h_\theta(v)\|_2 \;\ge\; m_h\,\|u-v\|_2.
\]
\end{assumption}

\begin{assumption}[Conditional exchangeability]\label{Assumption: Conditional exchangeability}
Let $X_1,\dots,X_n$ be observable random variables and let $\mu$ be a conditioning variable.
We say that $(X_1,\dots,X_n)$ are conditionally exchangeable given $\mu$ if, for every permutation $\pi$ of $\{1,\dots,n\}$, we have $(X_1,\dots,X_n)\mid\mu \stackrel{d}{=} (X_{\pi(1)},\dots,X_{\pi(n)})\mid\mu$.
\end{assumption}


Assumptions~\ref{Assumption: Li-Lipschitz}--\ref{Assumption: co-Lipschitz} are imposed only on the final projection head $h_\theta$.
Since $h_\theta(x)=Wx$, Assumption~\ref{Assumption: Li-Lipschitz} holds with $L_h=\|W\|_{\mathrm{op}}$.
Assumption~\ref{Assumption: co-Lipschitz} characterizes an idealized regime where the projection head $h_\theta$ is bounded away from zero on the support of the input data. 
This allows us to adopt an effective constant $m_h=\sigma_{\min}(W) > 0$, which is crucial for rigorously deriving the lower bounds of the learning objective $\mathcal{L}_{\mathrm{FM}}(\theta)$.
We use this assumption only to derive a quantitative lower bound.

Assumption~\ref{Assumption: Conditional exchangeability} posits a statistical symmetry across the three modality-specific embeddings.
Abstracting away the inherent heterogeneity of the raw modalities, this assumption implies that, conditioned on the shared semantic signal $\mu$, the embeddings $c_{\mathrm{img}}, c_{\mathrm{lang}}, e_{\mathrm{act}}$ follow identical conditional distributions.
This simplification allows us to treat the information contribution from each modality as mathematically interchangeable, facilitating the derivation of a lower bound that explicitly depends on $d_{\mathrm{avg}}$.

\begin{theorem}\label{Theorem: upper and lower bounds}
Under Assumption~\ref{Assumption: Li-Lipschitz}, we have
\[
L_{\mathrm{FM}}(\theta)
\;\le\;
2\,\mathbb{E}\,\|h_\theta(\mu)-\Delta\|_2^2
\;+\;2L_h^{2}\,\mathbb{E}[d_{\mathrm{avg}}].
\]
Moreover, under Assumptions~\ref{Assumption: co-Lipschitz} and~\ref{Assumption: Conditional exchangeability}, let
\(u^\star(\mu):=\mathbb{E}[\Delta\mid \mu]\) be the Bayes target when only \(\mu\) is observed. Then
\[
L_{\mathrm{FM}}(\theta)
\;\ge\;
\mathbb{E}\big\|u^\star(\mu)-\Delta\big\|^2
\;+\;\frac{m_h^{2}}{3}\,\mathbb{E}[d_{\mathrm{avg}}].
\]
\end{theorem}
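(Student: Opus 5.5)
Both bounds come from splitting the prediction error around the pooled embedding $\mu$, writing
\[
v_\theta = h_\theta(e_{\mathrm{act}}) = h_\theta(\mu) + \big(h_\theta(e_{\mathrm{act}})-h_\theta(\mu)\big),
\]
so that
\[
L_{\mathrm{FM}}(\theta)=\mathbb{E}\big\|\big(h_\theta(\mu)-\Delta\big)+\big(h_\theta(e_{\mathrm{act}})-h_\theta(\mu)\big)\big\|_2^2 .
\]
Everything then reduces to controlling $\|e_{\mathrm{act}}-\mu\|_2^2$ by $d_{\mathrm{avg}}$. I will use one algebraic identity for three vectors $X_1,X_2,X_3$ with mean $\mu$:
\[
\textstyle\sum_{i}\|X_i-\mu\|_2^2=\tfrac13\sum_{i<j}\|X_i-X_j\|_2^2 = d_{\mathrm{avg}} .
\]

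\emph{Upper bound.} Apply $\|a+b\|^2\le 2\|a\|^2+2\|b\|^2$ to the split above, then Assumption~\ref{Assumption: Li-Lipschitz}. This gives $\|h_\theta(e_{\mathrm{act}})-h_\theta(\mu)\|^2\le L_h^2\|e_{\mathrm{act}}-\mu\|^2$. By the identity, $\|e_{\mathrm{act}}-\mu\|^2\le d_{\mathrm{avg}}$; a direct bound via $e_{\mathrm{act}}-\mu=\tfrac13[(e_{\mathrm{act}}-c_{\mathrm{img}})+(e_{\mathrm{act}}-c_{\mathrm{lang}})]$ even gives $\tfrac23 d_{\mathrm{avg}}$. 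Taking expectations yields the first inequality. This part is routine.

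\emph{Lower bound.} Here I expand the square exactly instead of using an inequality:
\[
L_{\mathrm{FM}}=\mathbb{E}\|h_\theta(\mu)-\Delta\|^2+\mathbb{E}\|h_\theta(e_{\mathrm{act}})-h_\theta(\mu)\|^2+2\,\mathbb{E}\langle h_\theta(e_{\mathrm{act}})-h_\theta(\mu),\,h_\theta(\mu)-\Delta\rangle .
\]
Each term is then handled as follows.
\begin{itemize}
\item \emph{First term.} Since $h_\theta(\mu)$ is $\mu$-measurable, the $L^2$ optimality of conditional expectation gives $\mathbb{E}\|h_\theta(\mu)-\Delta\|^2\ge\mathbb{E}\|u^\star(\mu)-\Delta\|^2$.
\item \emph{Second term.} Assumption~\ref{Assumption: co-Lipschitz} gives a lower bound of $m_h^2\,\mathbb{E}\|e_{\mathrm{act}}-\mu\|^2$. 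By Assumption~\ref{Assumption: Conditional exchangeability}, the three quantities $\mathbb{E}[\|X_i-\mu\|^2\mid\mu]$ coincide. Hence $\mathbb{E}\|e_{\mathrm{act}}-\mu\|^2=\tfrac13\mathbb{E}\sum_i\|X_i-\mu\|^2=\tfrac13\mathbb{E}[d_{\mathrm{avg}}]$, which is exactly the constant $m_h^2/3$ in the statement.
\item \emph{Cross term, first part.} Because $h_\theta(x)=Wx$ is linear, $h_\theta(e_{\mathrm{act}})-h_\theta(\mu)=W(e_{\mathrm{act}}-\mu)$. Exchangeability gives $\mathbb{E}[e_{\mathrm{act}}\mid\mu]=\mathbb{E}[c_{\mathrm{img}}\mid\mu]=\mathbb{E}[c_{\mathrm{lang}}\mid\mu]$. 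These three sum to $3\mu$, so $\mathbb{E}[e_{\mathrm{act}}-\mu\mid\mu]=0$. This kills the part of the cross term paired with $h_\theta(\mu)$.
\end{itemize}

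The main obstacle is the remaining cross term $\mathbb{E}\langle W(e_{\mathrm{act}}-\mu),\Delta\rangle$, because $\Delta$ is not a function of $\mu$. I would make this vanish by reading Assumption~\ref{Assumption: Conditional exchangeability} as exchangeability of the triple $(c_{\mathrm{img}},c_{\mathrm{lang}},e_{\mathrm{act}})$ jointly with $\Delta$ given $\mu$; that is, take the observables to be $(X_i,\Delta)$. Then the three quantities $\mathbb{E}[\langle W(X_i-\mu),\Delta\rangle\mid\mu]$ are equal, and they sum to $\mathbb{E}[\langle W\sum_i(X_i-\mu),\Delta\rangle\mid\mu]=0$, so each is zero. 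If this joint reading is not available, the fallback is to add an explicit condition: $\Delta$ is conditionally independent of the deviation $e_{\mathrm{act}}-\mu$ given $\mu$. With the cross term eliminated, combining the three bounds gives the stated lower bound.
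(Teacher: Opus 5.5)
\textbf{Upper bound.} This is the paper's argument: split at $h_\theta(\mu)$, apply $\|a+b\|_2^2\le 2\|a\|_2^2+2\|b\|_2^2$, use the Lipschitz bound, then the three-point identity. Your constants are correct.

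\textbf{Lower bound: a different route.} The paper sets $A=h_\theta(e_{\mathrm{act}})$, $B=\Delta$ and decomposes $\mathbb{E}\|A-B\|^2$ conditionally on $\mu$ into $\mathbb{E}\,\mathrm{Var}(A\mid\mu)+\mathbb{E}\,\mathrm{Var}(B\mid\mu)+\mathbb{E}\|\mathbb{E}[A\mid\mu]-\mathbb{E}[B\mid\mu]\|^2$. It drops the last term and then needs two further facts. One is a separate lemma, proved with an independent conditional copy, that an $m_h$-co-Lipschitz map inflates conditional variance by at least $m_h^2$. The other is $\mathbb{E}[e_{\mathrm{act}}\mid\mu]=\mu$.

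You instead expand exactly around the $\mu$-measurable point $h_\theta(\mu)$, use $L^2$-optimality of $u^\star$ for the first term, and apply co-Lipschitzness pointwise to $(e_{\mathrm{act}},\mu)$. This avoids the variance lemma. With $h_\theta(x)=Wx$, both computations reach the same identity $L_{\mathrm{FM}}=\mathbb{E}\|h_\theta(\mu)-\Delta\|^2+\mathbb{E}\|W(e_{\mathrm{act}}-\mu)\|^2$. Your version is more elementary and shows this exact structure. The price is that you need linearity to kill $\mathbb{E}\langle W(e_{\mathrm{act}}-\mu),h_\theta(\mu)\rangle$. The paper centers at $\mathbb{E}[h_\theta(e_{\mathrm{act}})\mid\mu]$ instead, so its argument works for any co-Lipschitz head.

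\textbf{The remaining cross term.} The obstacle you isolate is real and not an artifact of your route. The paper justifies its decomposition ``by the law of total expectation and conditional independence''. That step requires $\mathbb{E}\langle h_\theta(e_{\mathrm{act}})-\mathbb{E}[h_\theta(e_{\mathrm{act}})\mid\mu],\,\Delta-u^\star(\mu)\rangle=0$. For a linear head with $\mathbb{E}[e_{\mathrm{act}}\mid\mu]=\mu$, this is exactly your condition $\mathbb{E}\langle W(e_{\mathrm{act}}-\mu),\Delta\rangle=0$. Conditional independence is not among the stated assumptions, so your fallback is precisely the paper's hidden hypothesis.

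Some such hypothesis is necessary. Take $h_\theta=\mathrm{id}$ on $\mathbb{R}$, $c_{\mathrm{img}},c_{\mathrm{lang}},e_{\mathrm{act}}$ i.i.d.\ $\mathcal{N}(0,1)$, and $\Delta=e_{\mathrm{act}}$. All three stated assumptions hold and $L_{\mathrm{FM}}=0$, yet the claimed lower bound equals $\tfrac23+\tfrac23=\tfrac43$. So state the extra condition explicitly as a hypothesis of the theorem, not as a reading of the exchangeability assumption.

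Your joint-exchangeability variant (exchangeability of the pairs $(X_i,\Delta)$ given $\mu$) is a valid sufficient condition that the paper does not use. It is not comparable to conditional independence: it allows conditional dependence between $\Delta$ and $e_{\mathrm{act}}$, provided that dependence is symmetric across the three embeddings.
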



\begin{remark}
The proof of Theorem~\ref{Theorem: upper and lower bounds} is deferred to Appendix~\ref{appsec:theoretical}.
Theorem~\ref{Theorem: upper and lower bounds} isolates how the alignment statistic $d_{\mathrm{avg}}$ enters the flow-matching objective:
$\mathbb{E}[d_{\mathrm{avg}}]$ appears in the upper bound with coefficient $2L_h^2$ and in the lower bound with coefficient $m_h^2/3$.
Thus, reducing $\mathbb{E}[d_{\mathrm{avg}}]$ narrows the admissible range of $L_{\mathrm{FM}}(\theta)$ around the conditional Bayes risk $\mathbb{E}\|u^\star(\mu)-\Delta\|^2$.
This provides an objective-level rationale for penalizing misalignment via $L_{\mathrm{cons}}$.
\end{remark}

%% file: sections/4_Exp.tex
\section{Experiments}
\label{sec:exp}

\subsection{Benchmarks and Baselines}

\textbf{Benchmarks.} We evaluate our proposed AGFT on the LIBERO benchmark \citep{liu2023libero}, a comprehensive suite for studying embodied robotic manipulation. 
LIBERO comprises four task collections—LIBERO-Spatial, LIBERO-Object, LIBERO-Goal, and LIBERO-Long—each containing ten tasks and 500 expert demonstrations. These suites collectively evaluate spatial reasoning, object-centric manipulation, goal-oriented behavior, and long-horizon task execution. 
For each task, the policy observes RGB inputs from both a third-person and a wrist-mounted camera, accompanied by natural language instructions and the end-effector state. 
To isolate multimodal understanding capabilities, we intentionally exclude depth, affordance, or other auxiliary sensory inputs, ensuring that the model learns purely from visual and linguistic signals.
We employ the modified version of LIBERO from OpenVLA \citep{kim2024openvla} as the data source for finetuning and evaluation.
Additional results on CALVIN \citep{mees2022calvin} are deferred to Appendix \ref{ap:add_exp}. Owing to space constraints, we primarily report LIBERO performance in the main text and adopt it as the principal benchmark for our ablation studies.

\textbf{Baselines.} We compare AGFT against a diverse set of state-of-the-art policies across the two dominant paradigms of action generation: autoregressive (AR) token-based models and continuous generative models. 
The AR category includes OpenVLA \citep{kim2024openvla}, Octo \citep{team2024octo}, $\pi_0$+FAST \citep{black2024pi_0, pertsch2025fast}, CoTVLA \citep{zhao2025cot}, TraceVLA \citep{zheng2024tracevla}, SpatialVLA \citep{qu2025spatialvla}, and WorldVLA \citep{cen2025worldvla}, which model actions as discrete token sequences decoded from vision–language embeddings.
For continuous generative policies, we evaluate Diffusion Policy \citep{chi2025diffusion}, MDT \citep{reuss2024multimodal}, and Dita \citep{hou2025dita}, alongside our AGFT. 
All methods are trained and evaluated under identical observation modalities and assessed using the official LIBERO success rate metric. Reported baseline scores are either drawn from original publications or reproduced using publicly available implementations to ensure fair comparison.

\begin{table*}[!t]
\rowcolors{2}{gray!10}{white}
\centering
\caption{Evaluation results on LIBERO, measured by success rate (\%). Best in \textbf{bold}.}
\vspace{-.2cm}
\label{tab:libero-results}
\scalebox{0.8}{
\begin{tabular}{L{2.5cm}C{2.1cm}C{2.1cm}C{2.2cm}C{2.3cm}C{2.2cm}C{1cm}}
\toprule
\multicolumn{1}{l}{\textbf{Method}} &
\multicolumn{1}{c}{\textbf{LIBERO-Sp}} &
\multicolumn{1}{c}{\textbf{LIBERO-Obj}} &
\multicolumn{1}{c}{\textbf{LIBERO-Goal}} &
\multicolumn{1}{c}{\textbf{LIBERO-Long}} &
\multicolumn{1}{c}{\textbf{Average}} &
\multicolumn{1}{c}{\textbf{Year}}\\
\midrule
OpenVLA          & 84.7 & 88.4 & 79.2 & 53.7 & 76.5 & 2024 \\
Octo            & 78.9 & 85.7 & 84.6 & 51.1 & 75.1 & 2024 \\
$\pi_{0}$ + FAST & \textbf{96.4} & 96.8 & 88.6 & 60.2 & 85.5 & 2025\\
CoTVLA         & 81.1 & 87.5 & \textbf{91.6} & \textbf{87.6} & 87.0 & 2025\\
TraceVLA & 84.6 &  85.2 & 75.1 & 54.1 & 74.8 & 2024\\
SpatialVLA & 88.2 & 89.9 & 78.6 & 55.5 & 78.1 & 2025\\
WorldVLA         & 87.6 & 96.2 & 83.4 & 60.0 & 81.8 & 2025\\
Diffusion Policy  & 78.3 & 92.5 & 68.3 & 50.5 & 72.4 & 2025\\
MDT             & 78.5 & 87.5 & 73.5 & 64.8 & 76.1 & 2024\\ \midrule
Dita      & 84.2 & 96.3 & 85.4 & 63.8 & 82.4 & 2025\\
Dita + Align Loss & 90.2 & 96.4 & 90.0 & 72.3 & 87.2 & 2026 \\
\textbf{AGFT} & 92.4 & \textbf{97.6} & 90.2 & 75.2 & \textbf{88.9} & 2026\\
\bottomrule
\end{tabular}}
\vspace{-.6cm}
\end{table*}

\subsection{Evaluation Performance}

We evaluate AGFT in the simulated LIBERO environment to assess its generalization and multimodal reasoning capabilities. Table \ref{tab:libero-results} summarizes the quantitative performance across the four task suites. AGFT achieves success rates of 92.4\%, 97.6\%, 90.2\%, and 75.2\% on LIBERO-Spatial, -Object, -Goal, and -Long respectively, yielding an overall average of 88.9\%, surpassing all baselines.
Compared to Dita—the architectural backbone of AGFT—augmenting Dita with the proposed alignment loss yields a 5.8\% relative improvement, while AGFT attains a further 7.9\% relative gain in average success rate, substantiating the effectiveness of the contrastive alignment objective. The gains are especially pronounced on the LIBERO-Long suite, where AGFT exhibits improved stability and semantic consistency over long-horizon executions, reflecting stronger temporal coherence and cross-modal grounding.

Notably, AGFT also surpasses all autoregressive baselines, including $\pi_0$+FAST and OpenVLA, suggesting that alignment-driven flow-based modeling achieves a favorable balance between task adaptation and inference efficiency.
Overall, these results suggest that AGFT effectively captures the fine-grained visual cues required for temporally extended manipulation and generalizes robustly across scenes. We attribute these gains to explicit cross-modal alignment, which strengthens the coupling among visual, linguistic, and action representations, thereby promoting more consistent action selection and improved downstream task adaptation.

\subsection{Ablations}

\begin{table*}[t!]
\rowcolors{2}{gray!10}{white}
\centering
\caption{Ablation results on LIBERO-Long. Starting from DiT fine-tuning, we report success rate (\%) gains by incrementally enabling Flow Matching (FM) and contrastive alignment terms (Text–Action, Image–Action, and Text–Image). \ding{51} denotes the component is active.}
\vspace{-.2cm}
\label{tab:module}
\small
\centering
\scalebox{0.7}{
\begin{tabular}{C{2.5cm}C{2.5cm}C{2.5cm}C{2.5cm}C{2.5cm}C{2.5cm}}
\toprule[2pt]
Index & Flow Matching & Text-Action Loss &  Image-Action Loss & Text-Image Loss & Performance \\
\midrule 
1 & & & & & 63.8 \\
2 & & \ding{51} & & & 70.2 \\
3 & & & \ding{51} & & 71.6 \\
4 & & & & \ding{51} & 64.1 \\
5 & & \ding{51} & \ding{51} & & 71.8 \\
6 & & \ding{51} & \ding{51} & \ding{51} & 72.3 \\
7 & \ding{51} & & & & 67.8  \\
8 & \ding{51} & \ding{51} & & & 72.4  \\
9 & \ding{51} & & \ding{51} &  & 73.4 \\
10 & \ding{51} &  &  & \ding{51}  & 68.0 \\
11 & \ding{51} & \ding{51} & \ding{51} &  & 74.6  \\
12 & \ding{51} & \ding{51} & \ding{51} & \ding{51}  & 75.2  \\
\bottomrule[2pt]
\end{tabular}
}
\vspace{-.5cm}
\end{table*}

\paragraph{Components.} To better understand the contribution of each component within AGFT, we conduct a detailed ablation study on the LIBERO-Long suite, which is particularly sensitive to long-horizon temporal consistency and multimodal grounding. As summarized in Table \ref{tab:module}, we evaluate various combinations of the flow matching module and three contrastive alignment objectives: text–action, image–action, and text–image losses.

The baseline configuration (Index 1), which corresponds to the original Dita objective, achieves a success rate of 63.8\%. Introducing either the text–action or image–action contrastive loss individually leads to substantial gains, reaching 70.2\% and 71.6\% respectively. This demonstrates that aligning actions with their corresponding modality-specific features yields more coherent and grounded policy execution. When both are jointly applied (Index 5–6), the performance rises further to over 72\%, confirming the complementary nature of textual and visual alignment in shaping a consistent action manifold.

Replacing diffusion supervision with the flow matching objective (Index 7) already improves the baseline to 67.8\%, validating that deterministic flow-based training enables smoother and more efficient policy learning. Combining flow matching with either alignment term yields additional improvements (Indices 8–11), while the full configuration integrating flow matching with all three contrastive losses (Index 12) achieves the highest success rate of 75.2\%. Notably, the text–image contrastive loss contributes a more modest performance gain compared to the action-centered objectives, suggesting that action alignment serves as the dominant factor in enhancing cross-modal consistency.



\paragraph{Contrastive Loss.} To evaluate the impact of different alignment objectives, we compare our pairwise contrastive loss (Equation \ref{eq:contraloss}) with the widely adopted InfoNCE formulation \citep{liu2023contrastive}. The InfoNCE loss is defined as:
\begin{align}
    \epsilon(x_i, x_j) &= 
    \frac{x_i^{\top} x_j}{\lVert x_i \rVert_2 \, \lVert x_j \rVert_2 \, \tau}, \\
    L_{\mathrm{InfoNCE}} &= 
    -\frac{1}{N} \sum_{i=1}^{N} 
    \log
    \frac{
    \exp\big(\epsilon(x_i, x_i^+)\big)
    }{
    \sum_{j=1}^{N} \exp\big(\epsilon(x_i, x_j)\big)
    }, \label{eq:contras2}
\end{align}
where $\tau > 0$ is a temperature hyperparameter that controls the sharpness of the similarity distribution, and $(x_i, x_i^+)$ denotes a positive (aligned) pair among $N$ sampled instances.  
Unlike InfoNCE, which enforces feature discriminability through softmax normalization over batch samples, our pairwise margin-based formulation directly penalizes misaligned embeddings via Euclidean distance constraints. This explicit margin mechanism provides a more balanced treatment of positive and negative pairs, yielding improved regularization for cross-modal consistency within the multimodal action representation space.


As reported in Figure \ref{fig:loss}, the pairwise contrastive loss consistently outperforms InfoNCE across most LIBERO suites, achieving +1.8\%, +1.6\%, and +3.2\% higher success rates on LIBERO-Spatial, LIBERO-Object, and LIBERO-Goal tasks, respectively. The only marginal exception occurs in LIBERO-Long, where InfoNCE yields a slight advantage (+0.8\%). These results highlight that margin-based pairwise supervision provides stronger local structure learning, reinforcing one-to-one alignment between actions and corresponding visual–linguistic signals. Conversely, InfoNCE’s reliance on batch-level negatives may dilute supervision signals in small-sample embodied datasets.
Nonetheless, the inclusion of the InfoNCE-based alignment objective still yields improvements over the original Dita baseline, suggesting that explicitly enforcing multimodal alignment consistently enhances overall policy performance.

\begin{figure}[t] 
    \centering
    \begin{subfigure}[b]{0.48\textwidth}
        \centering
        \includegraphics[width=\linewidth]{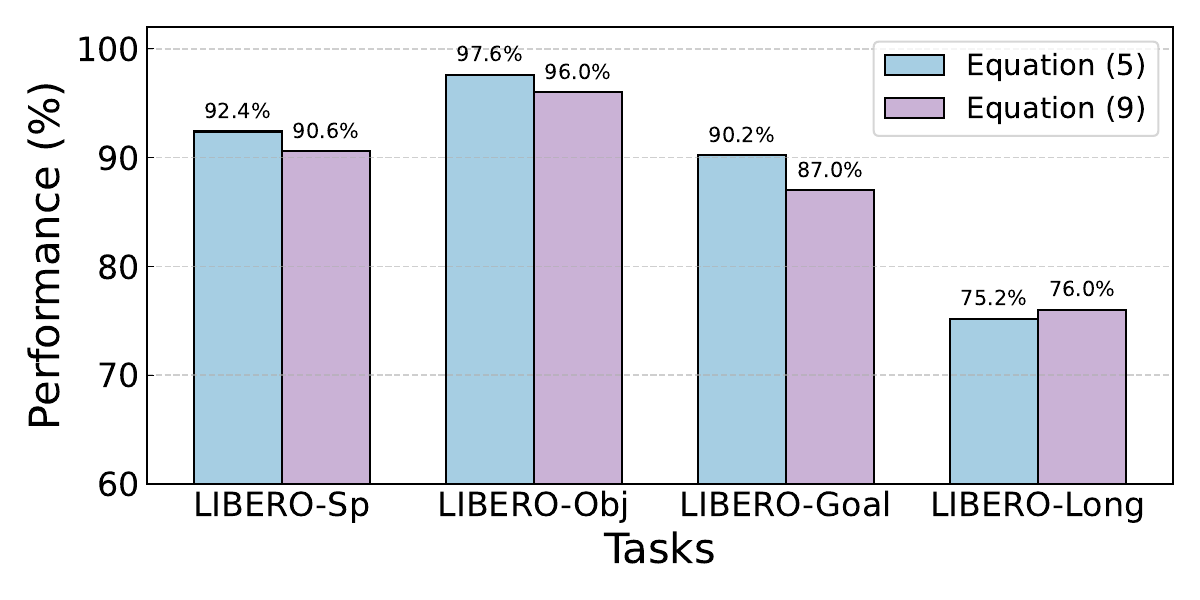}
        \vspace{-.6cm}
        \caption{Ablation study on alignment losses.}
        \label{fig:loss}
    \end{subfigure}
    \hfill 
    \begin{subfigure}[b]{0.48\textwidth}
        \centering
        \includegraphics[width=\linewidth]{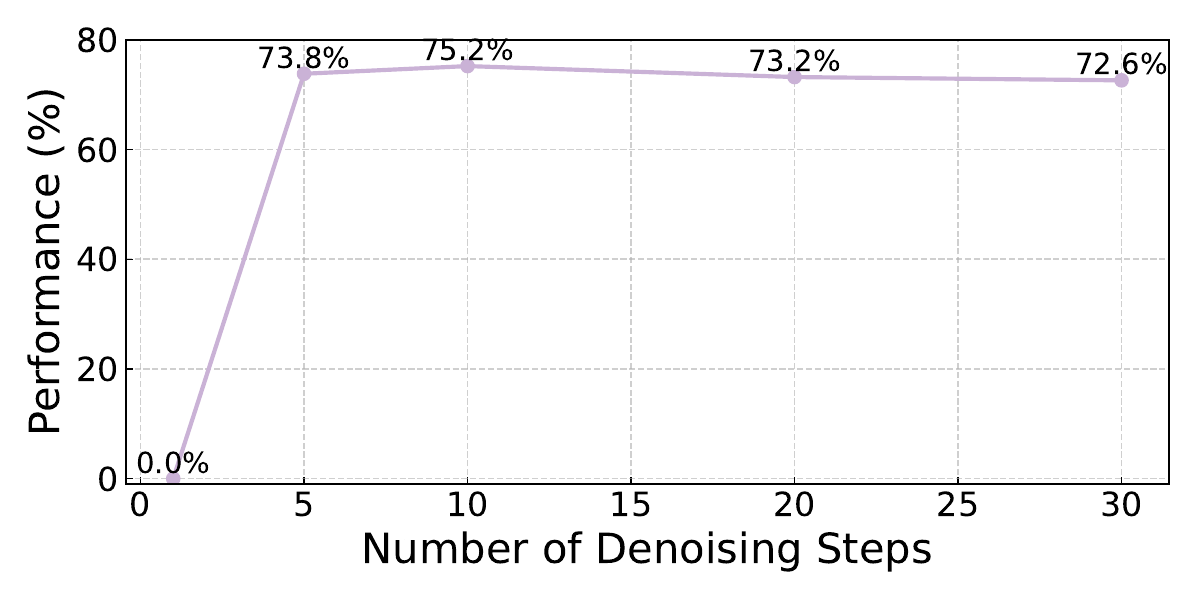}
        \vspace{-.6cm}
        \caption{Performance across denoising steps.}
        \label{fig:denoise}
    \end{subfigure}
    \vspace{-.2cm}
    \caption{Experimental analysis on LIBERO-Long and diverse tasks. (a) compares different alignment losses, while (b) shows the impact of denoising steps.}
    \vspace{-.6cm}
    \label{fig:combined_analysis}
\end{figure}


\paragraph{Denoising Steps.} Diffusion-based generative models typically rely on a large number of stochastic denoising steps to iteratively reconstruct signals from noise, as demonstrated in image synthesis domains \citep{rombach2022high}. When adapted for robotic control, this process directly affects inference latency and thus the real-time feasibility of the policy. Specifically, each denoising iteration introduces additional computational overhead, making it essential to balance sample quality with control frequency.
In contrast, our method replaces iterative stochastic denoising with a flow matching formulation, which learns a continuous and deterministic transport field between the latent base distribution and target action space. This design enables AGFT to generate actions with substantially fewer integration steps while retaining comparable fidelity.

To empirically examine this benefit, we conduct an ablation study on the LIBERO-Long benchmark, varying the number of denoising (integration) steps during inference. As shown in Figure \ref{fig:denoise}, AGFT achieves near-optimal performance with as few as five steps, yielding a 73.8\% success rate—closely matching the 75.2\% obtained at ten steps. Increasing the number of steps beyond this threshold provides negligible gain and even leads to minor degradation, likely due to redundant updates and the accumulation of integration noise. Notably, using a single step (no iterative refinement) fails completely, confirming that minimal integration is necessary to trace the learned flow trajectory. These findings highlight that the flow-based formulation dramatically improves inference efficiency, enabling high-quality action generation with an order of magnitude fewer steps than traditional diffusion models. A detailed comparison of inference times during the inference is provided in Appendix \ref{appsec:model}.

\begin{wraptable}{r}{0.5\textwidth} 
    \vspace{-12pt} 
    \centering
    \small
    \caption{Success rate (\%) on LIBERO-Long. \ding{51} indicates the parameter subset is optimized.}
    \label{tab:parameters}
    \rowcolors{2}{gray!10}{white}
    \scalebox{0.8}{ 
        \begin{tabular}{cccc} 
        \toprule[1.5pt]
        CLIP & DinoV2 & Causal Trans. & LIBERO-L \\
        \midrule 
         & & \ding{51} & 55.0 \\
        \ding{51} & & \ding{51} & 51.4  \\
         & \ding{51} & \ding{51} & 75.2 \\
        \ding{51} & \ding{51} & \ding{51} & 74.2  \\
        \bottomrule[1.5pt]
        \end{tabular}
    }
    \vspace{-10pt} 
\end{wraptable}

\paragraph{Trainable Parameters.} We further analyze the influence of trainable parameter selection on overall policy performance, focusing on the effect of fine-tuning different subsystem components within AGFT. In our primary experiments, only the Causal Transformer and its associated vision networks are fine-tuned, while the pretrained language encoder parameters remain frozen. To evaluate the trade-off between adaptation flexibility and representation stability, we systematically vary the set of trainable modules and assess their impact on the LIBERO-Long benchmark tasks.

As summarized in Table \ref{tab:parameters}, fine-tuning only the Causal Transformer yields a baseline success rate of 55.0\%. When fine-tuning is extended to the CLIP language encoder, performance decreases to 51.4\%, indicating that adapting the pre-trained linguistic embeddings can disrupt the alignment between visual and text representations, likely due to distributional mismatch between general language data and task-specific robotic instructions. In contrast, unfreezing the DINOv2 visual encoder alongside the Causal Transformer significantly improves performance to 75.2\%, suggesting that moderate adaptation of visual representations enhances grounding precision and spatial understanding. Interestingly, jointly fine-tuning both CLIP and DINOv2 slightly reduces performance to 74.2\%, supporting the hypothesis that while visual adaptability benefits manipulation reasoning, excessive adjustment of language parameters tends to destabilize cross-modal alignment.

%% file: sections/5_Con.tex
\vspace{-.2cm}
\section{Conclusion}
\vspace{-.1cm}
\label{sec:con}

We propose Alignment-Guided Flow Transformer (AGFT), a unified vision–language–action framework for reducing tri-modal misalignment in embodied policy learning. By integrating contrastive alignment with deterministic flow matching, AGFT yields coherent multimodal representations and stable, efficient action generation.
Our theoretical analysis shows alignment regularization tightens flow-matching bounds, and extensive experimental results verify consistent gains over diffusion-based DiT fine-tuning baselines while remaining competitive with strong VLA baselines.

%% file: sections/6_suppl.tex
\paragraph{Limitation.}
While we provide extensive experiments, ablations, and theoretical analysis, our empirical validation is currently restricted to the LIBERO and CALVIN benchmark, which may not fully capture the diversity of real-world embodied settings. Nevertheless, the theoretical guarantees are benchmark-agnostic, and the observed improvements are consistent across tasks and ablations, offering clear evidence of the method’s effectiveness—particularly the contribution of the alignment loss.

\section{Model and Training Scheme}
\label{appsec:model}

\paragraph{Model.} Language instructions are encoded by a pretrained CLIP text encoder \citep{radford2021learning}, which remains frozen throughout training to preserve semantic consistency. Visual observations are resized to $224 \times 224$ and processed using a pretrained DINOv2 backbone \citep{oquab2023dinov2}, whose parameters are fully finetuned to adapt visual features to downstream robotic tasks. A Q-Former, trained from scratch with a depth of 4 and 32 latent queries, projects high-dimensional visual features into a compact representation, while text tokens are injected into each block via FiLM conditioning to infuse linguistic cues into the visual stream. Action inputs are perturbed using a DDPM noise scheduler with 100 timesteps, and each timestep index is embedded via sinusoidal positional encoding. These multimodal features are fused within a causal Transformer, following a LLaMA2-style architecture \citep{touvron2023llama} with 12 self-attention layers and a hidden size of 768. The Transformer predicts the injected noise at each diffusion step. In total, the model contains 334M parameters, of which 221M are trainable. 

\paragraph{Pretraining.} The proposed model integrates visual, linguistic, and action modalities within a unified causal Transformer architecture \citep{hou2025dita}. During pretraining, we adopt the pipeline introduced by \citet{hou2025dita} using the Open X-Embodiment (OXE) datasets \citep{kim2024openvla, o2024open}, following prior works \citep{team2024octo, kim2024openvla} in terms of dataset composition and weighting. All action trajectories are normalized and filtered as in \citet{o2024open}. Pretraining employs a DDPM-based diffusion objective \citep{ho2020denoising, song2020denoising, hu2025prompt} with $T_{\text{train}} = 1000$ timesteps. Training is conducted with the AdamW optimizer \citep{loshchilov2017decoupled} for 100{,}000 steps, using learning rates of $1\times10^{-4}$ for both the causal Transformer and Q-Former, and $1\times10^{-5}$ for DINOv2. 

\paragraph{Finetuning.} During finetuning, the network is optimized for 40{,}000 steps using the AdamW optimizer with a base learning rate of $1\times10^{-4}$ across all task variants. A half-cycle cosine scheduler is applied to smoothly decay the learning rate during training, promoting stable convergence. Training is performed with a total batch size of 96, distributed across four NVIDIA RTX A6000 GPUs. To balance the influence of the contrastive alignment term, we set the contrastive loss weight to $1.0$ for LIBERO-Object and $0.001$ for LIBERO-Spatial, LIBERO-Goal, LIBERO-Long, and CALVIN tasks, reflecting the varying degrees of cross-modal supervision required among different task categories.

\paragraph{Inference Time.} We evaluate the inference efficiency of AGFT by comparing its action generation latency against Dita. AGFT generates actions in 0.295s per sample (with the inference step set to 10), whereas Dita—which depends on multi-step diffusion-based denoising—requires 1.573s (with 100 inference steps, following the default configuration in \citet{hou2025dita}). This pronounced reduction in inference time highlights the effectiveness of AGFT’s deterministic flow-matching formulation, enabling significantly faster and more computationally efficient policy execution.

\section{Theoretical Support}
\label{appsec:theoretical}
Before proving Theorem~\ref{Theorem: upper and lower bounds}, we record two auxiliary lemmas, stated as Lemma~\ref{Lemma: Var_Y > mVar_X} and Lemma~\ref{Lemma: mean under exchange}.

\begin{lemma}\label{Lemma: Var_Y > mVar_X}
Let $Y := h(X)$, where $h$ satisfies the $m_h$-co-Lipschitz condition and
$\mathbb{E}\!\left[\|X\|_2^{2}\mid \mu\right]<\infty$.
Then
\[
\mathrm{Var}(Y\mid \mu)\ \ge\ m_h^{2}\,\mathrm{Var}(X\mid \mu)\qquad\text{a.s.}
\]
\end{lemma}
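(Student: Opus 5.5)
The plan is to prove the inequality pathwise with an independent copy and then condition on $\mu$. The conditional variance of a vector-valued variable is taken in the trace sense, $\mathrm{Var}(X\mid\mu)=\mathbb{E}[\|X-\mathbb{E}[X\mid\mu]\|_2^2\mid\mu]$. The key identity is that, for $X'$ an independent copy of $X$ drawn from the same conditional law given $\mu$,
\[
\mathbb{E}\big[\|X-X'\|_2^2\mid\mu\big]=2\,\mathrm{Var}(X\mid\mu).
\]
This follows by expanding $\|(X-m)-(X'-m)\|_2^2$ with $m=\mathbb{E}[X\mid\mu]$ and noting that the cross term vanishes by conditional independence. The assumption $\mathbb{E}[\|X\|_2^2\mid\mu]<\infty$ ensures every quantity here is finite, and the same identity will be applied to $Y$.

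First, construct the coupling. On an enlarged probability space, take a regular conditional distribution of $X$ given $\mu$, and draw $X'$ from it independently of $X$ given $\mu$. Set $Y'=h(X')$. Then $(Y,Y')$ are conditionally i.i.d. given $\mu$, with the same conditional law as $Y$.

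For $Y'$ to inherit square integrability from $X'$, observe that co-Lipschitzness alone gives no upper bound on $h$. In the setting of the paper, however, $h=h_\theta$ is linear and so also Lipschitz. Alternatively, if $\mathrm{Var}(Y\mid\mu)=\infty$ the inequality is trivial. So we may assume finiteness on the event where it matters.

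Next, apply the co-Lipschitz condition pointwise: $\|Y-Y'\|_2^2=\|h(X)-h(X')\|_2^2\ge m_h^2\|X-X'\|_2^2$. Taking conditional expectations given $\mu$ and using monotonicity gives
\[
2\,\mathrm{Var}(Y\mid\mu)=\mathbb{E}\big[\|Y-Y'\|_2^2\mid\mu\big]\ge m_h^2\,\mathbb{E}\big[\|X-X'\|_2^2\mid\mu\big]=2m_h^2\,\mathrm{Var}(X\mid\mu)\quad\text{a.s.}
\]
Dividing by $2$ concludes the proof.

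The main obstacle is measure-theoretic rather than algebraic. It lies in justifying the conditionally independent copy and the identity $\mathbb{E}[\|X-X'\|^2\mid\mu]=2\mathrm{Var}(X\mid\mu)$ almost surely. This requires regular conditional distributions, which exist because $X$ takes values in a Euclidean space (a Polish space). With those in hand, the computation can be done for each fixed value of $\mu$ under the conditional law and then integrated.
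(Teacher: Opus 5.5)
Your proposal is correct and follows the paper's proof essentially verbatim. Both use an independent conditional copy $X'$, the identity $\mathrm{Var}(U\mid\mu)=\tfrac12\mathbb{E}[\|U-U'\|_2^2\mid\mu]$, and the pointwise co-Lipschitz bound $\|h(X)-h(X')\|_2\ge m_h\|X-X'\|_2$ inside the conditional expectation. Your extra care about regular conditional distributions and the possible non-square-integrability of $Y$, which the paper's proof silently skips, only tightens the same argument; you dispose of the latter correctly, since the inequality is trivial when the left side is infinite.
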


\begin{proof}
Let $X'$ be an independent copy of $X$ given $\mu$, and set $Y' := h(X')$.
Using the identity
\[
\mathrm{Var}(U\mid \mu)\;=\;\tfrac12\,\mathbb{E}\!\left[\|U-U'\|_2^{2}\,\middle|\,\mu\right],
\]
valid for any square-integrable $U$ with $U'$ an independent conditional copy, we obtain
\[
\mathrm{Var}(Y\mid \mu)
= \tfrac12\,\mathbb{E}\!\left[\|Y-Y'\|_2^{2}\,\middle|\,\mu\right]
\;\ge\; \tfrac12\,\mathbb{E}\!\left[m_h^{2}\|X-X'\|_2^{2}\,\middle|\,\mu\right]
= m_h^{2}\,\mathrm{Var}(X\mid \mu),
\]
where the inequality follows from the $m_h$-co-Lipschitz property of $h$.
\end{proof}

\begin{lemma}\label{Lemma: mean under exchange}
Under Assumption~\ref{Assumption: Conditional exchangeability}, with $\mu=\tfrac13(c_{\mathrm{img}}+c_{\mathrm{lang}}+e_{\mathrm{act}})$, we have
\[
\mathbb{E}\big[e_{\mathrm{act}}\mid \mu \big]=\mu
\quad\text{and}\quad
\mathbb{E}\,\|e_{\mathrm{act}}-\mu\|_2^2
=\frac{1}{9}\,\mathbb{E}\!\Big[\|e_{\mathrm{act}}-c_{\mathrm{img}}\|_2^2+\|e_{\mathrm{act}}-c_{\mathrm{lang}}\|_2^2+\|c_{\mathrm{img}}-c_{\mathrm{lang}}\|_2^2\Big].
\]
\end{lemma}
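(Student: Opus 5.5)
The plan is to prove the two claims separately. The first follows from exchangeability plus linearity of conditional expectation. The second reduces, by the usual variance expansion, to an algebraic identity.

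\emph{First claim.} By Assumption~\ref{Assumption: Conditional exchangeability}, the triple $(c_{\mathrm{img}},c_{\mathrm{lang}},e_{\mathrm{act}})$ has a conditional law given $\mu$ that is invariant under permutations. In particular, the three marginals given $\mu$ coincide. Assuming square integrability, so that conditional expectations exist, this gives
\[
\mathbb{E}[e_{\mathrm{act}}\mid\mu]=\mathbb{E}[c_{\mathrm{img}}\mid\mu]=\mathbb{E}[c_{\mathrm{lang}}\mid\mu].
\]
Since $\mu$ is $\sigma(\mu)$-measurable,
\[
\mu=\mathbb{E}[\mu\mid\mu]=\tfrac13\big(\mathbb{E}[c_{\mathrm{img}}\mid\mu]+\mathbb{E}[c_{\mathrm{lang}}\mid\mu]+\mathbb{E}[e_{\mathrm{act}}\mid\mu]\big)=\mathbb{E}[e_{\mathrm{act}}\mid\mu].
\]

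\emph{Second claim.} Write $a=c_{\mathrm{img}}-\mu$, $b=c_{\mathrm{lang}}-\mu$ and $c=e_{\mathrm{act}}-\mu$, so that $a+b+c=0$ pointwise. Two facts are needed.

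\begin{enumerate}
\item Exchangeability gives $\mathbb{E}[\|a\|^2\mid\mu]=\mathbb{E}[\|b\|^2\mid\mu]=\mathbb{E}[\|c\|^2\mid\mu]$. This holds because each of the three is $\|X-\mu\|^2$ for a coordinate $X$ with the same conditional law, and $\mu$ is fixed under the conditioning. Taking expectations,
\[
\mathbb{E}\|c\|^2=\tfrac13\,\mathbb{E}\big[\|a\|^2+\|b\|^2+\|c\|^2\big].
\]
\item For any three vectors with centroid zero, i.e.\ $a+b+c=0$,
\[
\|a-b\|^2+\|b-c\|^2+\|a-c\|^2=2\big(\|a\|^2+\|b\|^2+\|c\|^2\big)-2\big(a\cdot b+b\cdot c+a\cdot c\big).
\]
Squaring $a+b+c=0$ gives $2(a\cdot b+b\cdot c+a\cdot c)=-(\|a\|^2+\|b\|^2+\|c\|^2)$. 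Hence the sum of squared pairwise distances equals $3(\|a\|^2+\|b\|^2+\|c\|^2)$.
\end{enumerate}

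The pairwise differences of $a,b,c$ are exactly the pairwise differences of the original embeddings. Combining the two facts,
\[
\mathbb{E}\|e_{\mathrm{act}}-\mu\|^2=\tfrac13\cdot\tfrac13\,\mathbb{E}\Big[\textstyle\sum\text{pairwise squared distances}\Big],
\]
which is the stated factor $\tfrac19$.

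The only delicate point is the first claim: justifying that exchangeability yields equal conditional means and equal conditional second moments around $\mu$, given that $\mu$ is itself a function of the variables. This is handled by noting that $\mu$ is permutation-invariant and measurable with respect to the conditioning $\sigma$-algebra, so it behaves as a constant inside the conditional expectations. Integrability is the natural standing assumption, as in Lemma~\ref{Lemma: Var_Y > mVar_X}.
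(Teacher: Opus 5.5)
Your proposal is correct and follows essentially the same route as the paper: equal conditional means from exchangeability, averaged to recover $\mu$, and then the three-point identity combined with equal conditional second moments about $\mu$ to obtain the factor $\tfrac19$. The differences are cosmetic: you derive the three-point identity explicitly from $a+b+c=0$ and take expectations before applying it, whereas the paper cites the identity and conditions on $\mu$ first.
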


\begin{proof}
By conditional exchangeability,
\[
\mathbb{E}\big[e_{\mathrm{act}}\mid \mu \big]
=\mathbb{E}\big[c_{\mathrm{img}}\mid \mu \big]
=\mathbb{E}\big[c_{\mathrm{lang}}\mid \mu \big].
\]

Averaging these three equalities gives
\[
\mu=\tfrac13\Big(\mathbb{E}[c_{\mathrm{img}}\mid\mu]+\mathbb{E}[c_{\mathrm{lang}}\mid\mu]+\mathbb{E}[e_{\mathrm{act}}\mid\mu]\Big)
=\mathbb{E}[e_{\mathrm{act}}\mid\mu].
\]

For the second claim, the three point variance identity yields
\[
\|c_{\mathrm{img}}-\mu\|_2^2+\|c_{\mathrm{lang}}-\mu\|_2^2+\|e_{\mathrm{act}}-\mu\|_2^2
=\tfrac13\Big(\|e_{\mathrm{act}}-c_{\mathrm{img}}\|_2^2+\|e_{\mathrm{act}}-c_{\mathrm{lang}}\|_2^2+\|c_{\mathrm{img}}-c_{\mathrm{lang}}\|_2^2\Big).
\]

Conditioning on $\mu$ and using exchangeability, the three terms on the left have the same conditional expectation, hence
\[
\mathbb{E}\big[\|e_{\mathrm{act}}-\mu\|_2^2\mid\mu\big]
=\frac{1}{9}\,\mathbb{E}\!\Big[\|e_{\mathrm{act}}-c_{\mathrm{img}}\|_2^2+\|e_{\mathrm{act}}-c_{\mathrm{lang}}\|_2^2+\|c_{\mathrm{img}}-c_{\mathrm{lang}}\|_2^2\,\Bigm|\,\mu\Big].
\]

Taking expectation over $\mu$ gives the desired equality.
\end{proof}

We now present the proof of Theorem~\ref{Theorem: upper and lower bounds}.

\begin{proof}
(i) We first establish an upper bound on $L_{\mathrm{FM}}(\theta)$.
Using $\|a+b\|_2^2 \le 2\|a\|_2^2 + 2\|b\|_2^2$ with
$a = h_\theta(e_{\mathrm{act}})-h_\theta(\mu)$ and $b = h_\theta(\mu)-\Delta$, we obtain
\[
\|v_\theta(c_{\mathrm{img}},c_{\mathrm{lang}},t,z_t)-\Delta\|_2^2
= \|h_\theta(e_{\mathrm{act}})-\Delta\|_2^2
\le 2\|h_\theta(e_{\mathrm{act}})-h_\theta(\mu)\|_2^2
  + 2\|h_\theta(\mu)-\Delta\|_2^2.
\]

By $L_h$-Lipschitzness,
$\|h_\theta(e_{\mathrm{act}})-h_\theta(\mu)\|_2 \le L_h\,\|e_{\mathrm{act}}-\mu\|_2$.

For $\mu=\tfrac13\big(c_{\mathrm{img}}+c_{\mathrm{lang}}+e_{\mathrm{act}}\big)$, the three-point variance identity yields
\[
\|c_{\mathrm{img}}-\mu\|_2^2+\|c_{\mathrm{lang}}-\mu\|_2^2+\|e_{\mathrm{act}}-\mu\|_2^2
=\tfrac13\Big(\|e_{\mathrm{act}}-c_{\mathrm{img}}\|_2^2+\|e_{\mathrm{act}}-c_{\mathrm{lang}}\|_2^2+\|c_{\mathrm{img}}-c_{\mathrm{lang}}\|_2^2\Big).
\]

Since $\|c_{\mathrm{img}}-\mu\|_2^2$ and $\|c_{\mathrm{lang}}-\mu\|_2^2$ are nonnegative, we obtain
\[
\|e_{\mathrm{act}}-\mu\|_2^2
\;\le\; \tfrac13\!\Big(\|e_{\mathrm{act}}-c_{\mathrm{img}}\|_2^2
+\|e_{\mathrm{act}}-c_{\mathrm{lang}}\|_2^2
+\|c_{\mathrm{img}}-c_{\mathrm{lang}}\|_2^2\Big).
\]

Combining the displays and taking expectations yields 
\[
L_{\mathrm{FM}}(\theta)\;\le\;2\|h_\theta(\mu)-\Delta\|_2^2+\tfrac13\!\Big(\|e_{\mathrm{act}}-c_{\mathrm{img}}\|_2^2
+\|e_{\mathrm{act}}-c_{\mathrm{lang}}\|_2^2
+\|c_{\mathrm{img}}-c_{\mathrm{lang}}\|_2^2\Big).
\]

(ii) We then establish a lower bound for $L_{\mathrm{FM}}(\theta)$. Write \(A:=h_\theta(e_{\mathrm{act}})\) and \(B:=\Delta\). By the law of total expectation and conditional independence,
\begin{equation*}
\begin{aligned}
    \mathbb{E}\|A-B\|^2
=& \mathbb{E}\big[\mathrm{Var}(A\mid \mu)\big]
  + \mathbb{E}\big[\mathrm{Var}(B\mid \mu)\big]
  + \mathbb{E}\big\|\,\mathbb{E}[A\mid \mu]-\mathbb{E}[B\mid \mu]\,\big\|^2\\
\ge&
\mathbb{E}\big[\mathrm{Var}(A\mid \mu)\big]+\mathbb{E}\big[\mathrm{Var}(B\mid \mu)\big],
\end{aligned}
\end{equation*}
since the last term is nonnegative. Thus
\[
L_{\mathrm{FM}}(\theta)\;\ge\;\mathbb{E}\big[\mathrm{Var}\big(h_\theta(e_{\mathrm{act}})\mid \mu\big)\big]\;+\;\mathbb{E}\big[\mathrm{Var}(\Delta\mid \mu)\big].
\]

By Lemma~\ref{Lemma: Var_Y > mVar_X} with \(Y=h_\theta(X)\), \(X=e_{\mathrm{act}}\), we have
\[
\mathrm{Var}\big(h_\theta(e_{\mathrm{act}})\mid \mu\big)\ \ge\ m_h^2\,\mathrm{Var}(e_{\mathrm{act}}\mid \mu)\qquad\text{a.s.}
\]

Taking expectations and using Lemma~\ref{Lemma: mean under exchange} which gives \(\mathbb{E}[e_{\mathrm{act}}\mid \mu]=\mu\) and
\[
\mathbb{E}\big[\mathrm{Var}(e_{\mathrm{act}}\mid \mu)\big]
=\mathbb{E}\|e_{\mathrm{act}}-\mu\|^2
=\frac{1}{9}\,\mathbb{E}\!\Big[\|e_{\mathrm{act}}-c_{\mathrm{img}}\|_2^2
+\|e_{\mathrm{act}}-c_{\mathrm{lang}}\|_2^2
+\|c_{\mathrm{img}}-c_{\mathrm{lang}}\|_2^2\Big],
\]
we obtain
\[
\mathbb{E}\big[\mathrm{Var}\big(h_\theta(e_{\mathrm{act}})\mid \mu\big)\big]
\ \ge\
\frac{m_h^2}{9}\,\mathbb{E}\!\Big[\|e_{\mathrm{act}}-c_{\mathrm{img}}\|_2^2
+\|e_{\mathrm{act}}-c_{\mathrm{lang}}\|_2^2
+\|c_{\mathrm{img}}-c_{\mathrm{lang}}\|_2^2\Big].
\]

Finally, by definition of \(u^\star(\mu)=\mathbb{E}[\Delta\mid \mu]\),
\[
\mathbb{E}\big[\mathrm{Var}(\Delta\mid \mu)\big]
=\mathbb{E}\big\|\Delta-u^\star(\mu)\big\|^2.
\]

Combining the displays yields
\[
L_{\mathrm{FM}}(\theta)\;\ge\;\mathbb{E}\big\|\Delta-u^\star(\mu)\big\|^2+\frac{1}{9}\,\mathbb{E}\!\Big[\|e_{\mathrm{act}}-c_{\mathrm{img}}\|_2^2
+\|e_{\mathrm{act}}-c_{\mathrm{lang}}\|_2^2
+\|c_{\mathrm{img}}-c_{\mathrm{lang}}\|_2^2\Big].
\]
\end{proof}

\section{Additional Experiments on CALVIN}
\label{ap:add_exp}

\begin{table*}[!t]
 \centering
 \caption{Comparison with state-of-the-art methods on CALVIN under the ABC$\rightarrow$D protocol, evaluated by success rate and average success sequence length. Abbreviations indicate input modalities: S-RGB (static RGB), G-RGB (gripper RGB), S-RGBD (static RGB-D), G-RGBD (gripper RGB-D), P (proprioceptive arm state), and Cam (camera parameters).}
  \label{tab:calvin}
  \small
  \scalebox{0.9}{
  \begin{tabular}{c|c|ccccc|c}
\toprule    
    \multirow{2}{*}{\bf Method} & \multirow{2}{*}{\bf Input} & \multicolumn{6}{|c}{\bf No. Instructions in a Row (1000 chains)} \\ 
    \cline{3-8}
    & & 1 & 2 & 3 & 4 & 5 & Avg.Len. \\ \hline
    RoboFlamingo~\cite{li2023vision}  &S-RGB,G-RGB &	82.4\%	&61.9\%	& 46.6\%&	33.1\%	&23.5\%	& 2.47 \\
    GR-1~\cite{wu2023unleashing} & S-RGB,G-RGB,P &	85.4\%	&71.2\%	 & 59.6\%	&49.7\%	& 40.1\%&	3.06 \\
    3D Diffuser~\cite{ke20243d} &	S-RGBD,G-RGBD,P,Cam&	92.2\%	&78.7\%	& 63.9\%	& 51.2\%	& 41.2\%	& 3.27 \\
    GR-MG~\cite{li2025gr} &	S-RGBD,G-RGBD,P&	{\bf 96.8\%}	&{\bf 
 89.3\%}	&  {\bf 81.5\%} 	& {\bf 72.7}\%	& {\bf 64.4} \%	& {\bf 4.04} \\
 \hline
    SuSIE~\cite{black2023zero}&	 S-RGB &	87.0\%&	69.0\%	& 49.0\%	&38.0\%	 &26.0\%& 	2.69 \\
    GHIL-Glue~\cite{hatch2025ghil,black2023zero} & S-RGB & {\bf 95.2\%}	& {\bf 88.5\%}	& {\bf 73.2\%} & {\bf 62.5\%}& {\bf 49.8\%} & {\bf 3.69} \\
    Dita \cite{hou2025dita} & S-RGB &  {\bf 94.5\%}  & {\bf 82.5\%}  & {\bf 72.8\%} & {\bf 61.3\% } & {\bf 50.0\%} & {\bf 3.61} \\
    AGFT (Ours) & S-RGB &  {\bf 95.3\%}  & {\bf 85.3\%}  & {\bf 73.1\%} & {\bf 62.7\% } & {\bf 51.2\%} & {\bf 3.74} \\
\bottomrule
\end{tabular}}
\end{table*}

\textbf{Benchmarks.}
CALVIN \citep{mees2022calvin} is an open-source simulated benchmark for long-horizon, language-conditioned robotic manipulation. It provides four environments (Scenes A–D) and a standard cross-scene generalization protocol, denoted ABC$\rightarrow$D, in which policies are trained on Scenes A, B, and C and evaluated on the held-out Scene D. The benchmark comprises up to 1,000 task sequences, each formed by concatenating five elementary subtasks, thereby emphasizing compositional and temporally extended control. Performance is measured by the success sequence length, i.e., the number of consecutive subtasks completed within a sequence (up to five).
To evaluate the performance, we follow the ABC$\rightarrow$D protocol and restrict perception to static RGB observations.

\textbf{Performance.} Table~\ref{tab:calvin} summarizes the results on CALVIN. Some baselines exploit additional supervision or capacity: GR-MG leverages \emph{play} data during training, and GHIL-Glue~\cite{hatch2025ghil,black2023zero}—which extends SuSIE~\cite{black2023zero} with further finetuned generative modules~\cite{brooks2023instructpix2pix,xing2024dynamicrafter}—uses substantially larger model capacity, increasing compute and parameter counts.

For a controlled comparison, we follow the same training protocol as Dita and use the identical dataset. Under this matched setting, AGFT consistently outperforms Dita and remains competitive with strong prior baselines. Overall, the results indicate that AGFT captures the fine-grained visual cues required for long-horizon manipulation and generalizes robustly across scenes. We attribute these gains to explicit cross-modal alignment, which strengthens the coupling among vision, language, and action representations, leading to more stable action selection and improved downstream task adaptation.

\section{More Ablations}

\begin{figure}
    \centering
    \includegraphics[width=0.8\linewidth]{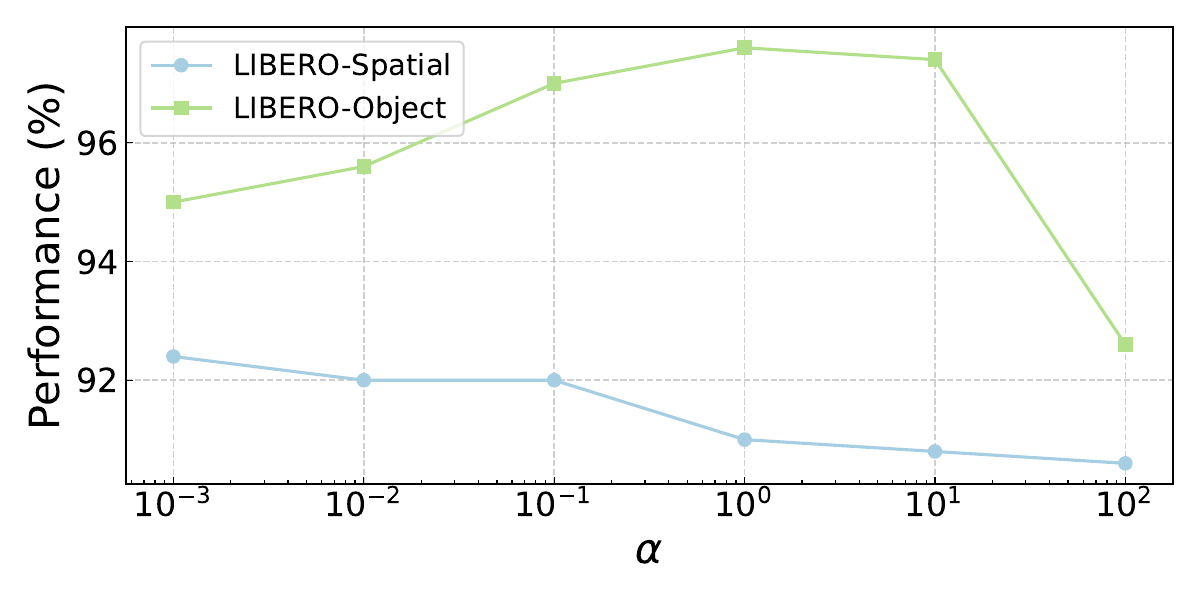}
    \vspace{-.5cm}
    \caption{Performance evaluation of different hyper-parameters $\alpha$ on the LIBERO-Spatial and LIBERO-Object tasks.}
    \label{fig:hyper}
    \vspace{-.4cm}
\end{figure}

\paragraph{Hyper-parameters.} We evaluate the effect of the weighting coefficient $\alpha$, which balances the contribution of the alignment-guided contrastive loss relative to the primary flow matching objective. Experiments are conducted on the LIBERO-Spatial and LIBERO-Object suites, as these tasks emphasize spatial reasoning and fine-grained object interactions—two aspects most sensitive to multimodal alignment strength. The results are summarized in Figure \ref{fig:hyper}.
Overall, AGFT exhibits strong robustness to variations in $\alpha$ across several orders of magnitude, confirming the stability of the joint training scheme. Performance remains consistently high within the range $\alpha \in [0.001, 1]$, where moderate weighting achieves an effective balance between flow consistency and semantic alignment. Notably, excessively large weights (e.g., $\alpha \geq 10$) begin to degrade performance, indicating that overemphasizing contrastive supervision may disrupt the optimization dynamics of the flow-matching field.
The best overall performance is observed at $\alpha = 0.001$ for LIBERO-Spatial and $\alpha = 1$ for LIBERO-Object, achieving success rates of 92.4\% and 97.6\%, respectively. These findings suggest that a balanced contribution of the alignment term yields the most favorable trade-off between semantic grounding and motion precision. 

%% file: checklist.tex
\section*{NeurIPS Paper Checklist}

\begin{enumerate}

\item {\bf Claims}
    \item[] Question: Do the main claims made in the abstract and introduction accurately reflect the paper's contributions and scope?
    \item[] Answer: \answerYes{} 
    \item[] Justification: 
    \item[] Guidelines:
    \begin{itemize}
        \item The answer \answerNA{} means that the abstract and introduction do not include the claims made in the paper.
        \item The abstract and/or introduction should clearly state the claims made, including the contributions made in the paper and important assumptions and limitations. A \answerNo{} or \answerNA{} answer to this question will not be perceived well by the reviewers. 
        \item The claims made should match theoretical and experimental results, and reflect how much the results can be expected to generalize to other settings. 
        \item It is fine to include aspirational goals as motivation as long as it is clear that these goals are not attained by the paper. 
    \end{itemize}

\item {\bf Limitations}
    \item[] Question: Does the paper discuss the limitations of the work performed by the authors?
    \item[] Answer: \answerYes{} 
    \item[] Justification: Please refer to Appendix.
    \item[] Guidelines:
    \begin{itemize}
        \item The answer \answerNA{} means that the paper has no limitation while the answer \answerNo{} means that the paper has limitations, but those are not discussed in the paper. 
        \item The authors are encouraged to create a separate ``Limitations'' section in their paper.
        \item The paper should point out any strong assumptions and how robust the results are to violations of these assumptions (e.g., independence assumptions, noiseless settings, model well-specification, asymptotic approximations only holding locally). The authors should reflect on how these assumptions might be violated in practice and what the implications would be.
        \item The authors should reflect on the scope of the claims made, e.g., if the approach was only tested on a few datasets or with a few runs. In general, empirical results often depend on implicit assumptions, which should be articulated.
        \item The authors should reflect on the factors that influence the performance of the approach. For example, a facial recognition algorithm may perform poorly when image resolution is low or images are taken in low lighting. Or a speech-to-text system might not be used reliably to provide closed captions for online lectures because it fails to handle technical jargon.
        \item The authors should discuss the computational efficiency of the proposed algorithms and how they scale with dataset size.
        \item If applicable, the authors should discuss possible limitations of their approach to address problems of privacy and fairness.
        \item While the authors might fear that complete honesty about limitations might be used by reviewers as grounds for rejection, a worse outcome might be that reviewers discover limitations that aren't acknowledged in the paper. The authors should use their best judgment and recognize that individual actions in favor of transparency play an important role in developing norms that preserve the integrity of the community. Reviewers will be specifically instructed to not penalize honesty concerning limitations.
    \end{itemize}

\item {\bf Theory assumptions and proofs}
    \item[] Question: For each theoretical result, does the paper provide the full set of assumptions and a complete (and correct) proof?
    \item[] Answer: \answerYes{} 
    \item[] Justification: Please refer to Section \ref{sec:theoretic} and \ref{appsec:theoretical}.
    \item[] Guidelines:
    \begin{itemize}
        \item The answer \answerNA{} means that the paper does not include theoretical results. 
        \item All the theorems, formulas, and proofs in the paper should be numbered and cross-referenced.
        \item All assumptions should be clearly stated or referenced in the statement of any theorems.
        \item The proofs can either appear in the main paper or the supplemental material, but if they appear in the supplemental material, the authors are encouraged to provide a short proof sketch to provide intuition. 
        \item Inversely, any informal proof provided in the core of the paper should be complemented by formal proofs provided in appendix or supplemental material.
        \item Theorems and Lemmas that the proof relies upon should be properly referenced. 
    \end{itemize}

    \item {\bf Experimental result reproducibility}
    \item[] Question: Does the paper fully disclose all the information needed to reproduce the main experimental results of the paper to the extent that it affects the main claims and/or conclusions of the paper (regardless of whether the code and data are provided or not)?
    \item[] Answer: \answerYes{} 
    \item[] Justification: Please refer to Section \ref{sec:exp} and \ref{appsec:model}.
    \item[] Guidelines:
    \begin{itemize}
        \item The answer \answerNA{} means that the paper does not include experiments.
        \item If the paper includes experiments, a \answerNo{} answer to this question will not be perceived well by the reviewers: Making the paper reproducible is important, regardless of whether the code and data are provided or not.
        \item If the contribution is a dataset and\slash or model, the authors should describe the steps taken to make their results reproducible or verifiable. 
        \item Depending on the contribution, reproducibility can be accomplished in various ways. For example, if the contribution is a novel architecture, describing the architecture fully might suffice, or if the contribution is a specific model and empirical evaluation, it may be necessary to either make it possible for others to replicate the model with the same dataset, or provide access to the model. In general. releasing code and data is often one good way to accomplish this, but reproducibility can also be provided via detailed instructions for how to replicate the results, access to a hosted model (e.g., in the case of a large language model), releasing of a model checkpoint, or other means that are appropriate to the research performed.
        \item While NeurIPS does not require releasing code, the conference does require all submissions to provide some reasonable avenue for reproducibility, which may depend on the nature of the contribution. For example
        \begin{enumerate}
            \item If the contribution is primarily a new algorithm, the paper should make it clear how to reproduce that algorithm.
            \item If the contribution is primarily a new model architecture, the paper should describe the architecture clearly and fully.
            \item If the contribution is a new model (e.g., a large language model), then there should either be a way to access this model for reproducing the results or a way to reproduce the model (e.g., with an open-source dataset or instructions for how to construct the dataset).
            \item We recognize that reproducibility may be tricky in some cases, in which case authors are welcome to describe the particular way they provide for reproducibility. In the case of closed-source models, it may be that access to the model is limited in some way (e.g., to registered users), but it should be possible for other researchers to have some path to reproducing or verifying the results.
        \end{enumerate}
    \end{itemize}

\item {\bf Open access to data and code}
    \item[] Question: Does the paper provide open access to the data and code, with sufficient instructions to faithfully reproduce the main experimental results, as described in supplemental material?
    \item[] Answer: \answerNo{} 
    \item[] Justification: We will release our code upon acceptance.
    \item[] Guidelines:
    \begin{itemize}
        \item The answer \answerNA{} means that paper does not include experiments requiring code.
        \item Please see the NeurIPS code and data submission guidelines (\url{https://neurips.cc/public/guides/CodeSubmissionPolicy}) for more details.
        \item While we encourage the release of code and data, we understand that this might not be possible, so \answerNo{} is an acceptable answer. Papers cannot be rejected simply for not including code, unless this is central to the contribution (e.g., for a new open-source benchmark).
        \item The instructions should contain the exact command and environment needed to run to reproduce the results. See the NeurIPS code and data submission guidelines (\url{https://neurips.cc/public/guides/CodeSubmissionPolicy}) for more details.
        \item The authors should provide instructions on data access and preparation, including how to access the raw data, preprocessed data, intermediate data, and generated data, etc.
        \item The authors should provide scripts to reproduce all experimental results for the new proposed method and baselines. If only a subset of experiments are reproducible, they should state which ones are omitted from the script and why.
        \item At submission time, to preserve anonymity, the authors should release anonymized versions (if applicable).
        \item Providing as much information as possible in supplemental material (appended to the paper) is recommended, but including URLs to data and code is permitted.
    \end{itemize}

\item {\bf Experimental setting/details}
    \item[] Question: Does the paper specify all the training and test details (e.g., data splits, hyperparameters, how they were chosen, type of optimizer) necessary to understand the results?
    \item[] Answer: \answerYes{} 
    \item[] Justification: Please refer to Section \ref{sec:exp} and \ref{appsec:model}.
    \item[] Guidelines:
    \begin{itemize}
        \item The answer \answerNA{} means that the paper does not include experiments.
        \item The experimental setting should be presented in the core of the paper to a level of detail that is necessary to appreciate the results and make sense of them.
        \item The full details can be provided either with the code, in appendix, or as supplemental material.
    \end{itemize}

\item {\bf Experiment statistical significance}
    \item[] Question: Does the paper report error bars suitably and correctly defined or other appropriate information about the statistical significance of the experiments?
    \item[] Answer: \answerNo{} 
    \item[] Justification: While we report the average success rates over three independent evaluation runs with different random seeds to ensure reliability, we do not explicitly include error bars in the main tables to maintain clarity and comparability with prior baselines, many of which are not open-sourced and only provide mean performance.
    \item[] Guidelines:
    \begin{itemize}
        \item The answer \answerNA{} means that the paper does not include experiments.
        \item The authors should answer \answerYes{} if the results are accompanied by error bars, confidence intervals, or statistical significance tests, at least for the experiments that support the main claims of the paper.
        \item The factors of variability that the error bars are capturing should be clearly stated (for example, train/test split, initialization, random drawing of some parameter, or overall run with given experimental conditions).
        \item The method for calculating the error bars should be explained (closed form formula, call to a library function, bootstrap, etc.)
        \item The assumptions made should be given (e.g., Normally distributed errors).
        \item It should be clear whether the error bar is the standard deviation or the standard error of the mean.
        \item It is OK to report 1-sigma error bars, but one should state it. The authors should preferably report a 2-sigma error bar than state that they have a 96\% CI, if the hypothesis of Normality of errors is not verified.
        \item For asymmetric distributions, the authors should be careful not to show in tables or figures symmetric error bars that would yield results that are out of range (e.g., negative error rates).
        \item If error bars are reported in tables or plots, the authors should explain in the text how they were calculated and reference the corresponding figures or tables in the text.
    \end{itemize}

\item {\bf Experiments compute resources}
    \item[] Question: For each experiment, does the paper provide sufficient information on the computer resources (type of compute workers, memory, time of execution) needed to reproduce the experiments?
    \item[] Answer: \answerYes{} 
    \item[] Justification: Please refer to Section \ref{sec:exp} and \ref{appsec:model}.
    \item[] Guidelines:
    \begin{itemize}
        \item The answer \answerNA{} means that the paper does not include experiments.
        \item The paper should indicate the type of compute workers CPU or GPU, internal cluster, or cloud provider, including relevant memory and storage.
        \item The paper should provide the amount of compute required for each of the individual experimental runs as well as estimate the total compute. 
        \item The paper should disclose whether the full research project required more compute than the experiments reported in the paper (e.g., preliminary or failed experiments that didn't make it into the paper). 
    \end{itemize}
    
\item {\bf Code of ethics}
    \item[] Question: Does the research conducted in the paper conform, in every respect, with the NeurIPS Code of Ethics \url{https://neurips.cc/public/EthicsGuidelines}?
    \item[] Answer: \answerYes{} 
    \item[] Justification: 
    \item[] Guidelines:
    \begin{itemize}
        \item The answer \answerNA{} means that the authors have not reviewed the NeurIPS Code of Ethics.
        \item If the authors answer \answerNo, they should explain the special circumstances that require a deviation from the Code of Ethics.
        \item The authors should make sure to preserve anonymity (e.g., if there is a special consideration due to laws or regulations in their jurisdiction).
    \end{itemize}

\item {\bf Broader impacts}
    \item[] Question: Does the paper discuss both potential positive societal impacts and negative societal impacts of the work performed?
    \item[] Answer: \answerYes{} 
    \item[] Justification: Please refer to Section \ref{sec:con}.
    \item[] Guidelines:
    \begin{itemize}
        \item The answer \answerNA{} means that there is no societal impact of the work performed.
        \item If the authors answer \answerNA{} or \answerNo, they should explain why their work has no societal impact or why the paper does not address societal impact.
        \item Examples of negative societal impacts include potential malicious or unintended uses (e.g., disinformation, generating fake profiles, surveillance), fairness considerations (e.g., deployment of technologies that could make decisions that unfairly impact specific groups), privacy considerations, and security considerations.
        \item The conference expects that many papers will be foundational research and not tied to particular applications, let alone deployments. However, if there is a direct path to any negative applications, the authors should point it out. For example, it is legitimate to point out that an improvement in the quality of generative models could be used to generate Deepfakes for disinformation. On the other hand, it is not needed to point out that a generic algorithm for optimizing neural networks could enable people to train models that generate Deepfakes faster.
        \item The authors should consider possible harms that could arise when the technology is being used as intended and functioning correctly, harms that could arise when the technology is being used as intended but gives incorrect results, and harms following from (intentional or unintentional) misuse of the technology.
        \item If there are negative societal impacts, the authors could also discuss possible mitigation strategies (e.g., gated release of models, providing defenses in addition to attacks, mechanisms for monitoring misuse, mechanisms to monitor how a system learns from feedback over time, improving the efficiency and accessibility of ML).
    \end{itemize}
    
\item {\bf Safeguards}
    \item[] Question: Does the paper describe safeguards that have been put in place for responsible release of data or models that have a high risk for misuse (e.g., pre-trained language models, image generators, or scraped datasets)?
    \item[] Answer: \answerNA{} 
    \item[] Justification: 
    \item[] Guidelines:
    \begin{itemize}
        \item The answer \answerNA{} means that the paper poses no such risks.
        \item Released models that have a high risk for misuse or dual-use should be released with necessary safeguards to allow for controlled use of the model, for example by requiring that users adhere to usage guidelines or restrictions to access the model or implementing safety filters. 
        \item Datasets that have been scraped from the Internet could pose safety risks. The authors should describe how they avoided releasing unsafe images.
        \item We recognize that providing effective safeguards is challenging, and many papers do not require this, but we encourage authors to take this into account and make a best faith effort.
    \end{itemize}

\item {\bf Licenses for existing assets}
    \item[] Question: Are the creators or original owners of assets (e.g., code, data, models), used in the paper, properly credited and are the license and terms of use explicitly mentioned and properly respected?
    \item[] Answer: \answerYes{} 
    \item[] Justification: 
    \item[] Guidelines:
    \begin{itemize}
        \item The answer \answerNA{} means that the paper does not use existing assets.
        \item The authors should cite the original paper that produced the code package or dataset.
        \item The authors should state which version of the asset is used and, if possible, include a URL.
        \item The name of the license (e.g., CC-BY 4.0) should be included for each asset.
        \item For scraped data from a particular source (e.g., website), the copyright and terms of service of that source should be provided.
        \item If assets are released, the license, copyright information, and terms of use in the package should be provided. For popular datasets, \url{paperswithcode.com/datasets} has curated licenses for some datasets. Their licensing guide can help determine the license of a dataset.
        \item For existing datasets that are re-packaged, both the original license and the license of the derived asset (if it has changed) should be provided.
        \item If this information is not available online, the authors are encouraged to reach out to the asset's creators.
    \end{itemize}

\item {\bf New assets}
    \item[] Question: Are new assets introduced in the paper well documented and is the documentation provided alongside the assets?
    \item[] Answer: \answerNA{} 
    \item[] Justification: 
    \item[] Guidelines:
    \begin{itemize}
        \item The answer \answerNA{} means that the paper does not release new assets.
        \item Researchers should communicate the details of the dataset\slash code\slash model as part of their submissions via structured templates. This includes details about training, license, limitations, etc. 
        \item The paper should discuss whether and how consent was obtained from people whose asset is used.
        \item At submission time, remember to anonymize your assets (if applicable). You can either create an anonymized URL or include an anonymized zip file.
    \end{itemize}

\item {\bf Crowdsourcing and research with human subjects}
    \item[] Question: For crowdsourcing experiments and research with human subjects, does the paper include the full text of instructions given to participants and screenshots, if applicable, as well as details about compensation (if any)? 
    \item[] Answer: \answerNA{} 
    \item[] Justification: 
    \item[] Guidelines:
    \begin{itemize}
        \item The answer \answerNA{} means that the paper does not involve crowdsourcing nor research with human subjects.
        \item Including this information in the supplemental material is fine, but if the main contribution of the paper involves human subjects, then as much detail as possible should be included in the main paper. 
        \item According to the NeurIPS Code of Ethics, workers involved in data collection, curation, or other labor should be paid at least the minimum wage in the country of the data collector. 
    \end{itemize}

\item {\bf Institutional review board (IRB) approvals or equivalent for research with human subjects}
    \item[] Question: Does the paper describe potential risks incurred by study participants, whether such risks were disclosed to the subjects, and whether Institutional Review Board (IRB) approvals (or an equivalent approval/review based on the requirements of your country or institution) were obtained?
    \item[] Answer: \answerNA{} 
    \item[] Justification: 
    \item[] Guidelines:
    \begin{itemize}
        \item The answer \answerNA{} means that the paper does not involve crowdsourcing nor research with human subjects.
        \item Depending on the country in which research is conducted, IRB approval (or equivalent) may be required for any human subjects research. If you obtained IRB approval, you should clearly state this in the paper. 
        \item We recognize that the procedures for this may vary significantly between institutions and locations, and we expect authors to adhere to the NeurIPS Code of Ethics and the guidelines for their institution. 
        \item For initial submissions, do not include any information that would break anonymity (if applicable), such as the institution conducting the review.
    \end{itemize}

\item {\bf Declaration of LLM usage}
    \item[] Question: Does the paper describe the usage of LLMs if it is an important, original, or non-standard component of the core methods in this research? Note that if the LLM is used only for writing, editing, or formatting purposes and does \emph{not} impact the core methodology, scientific rigor, or originality of the research, declaration is not required.
    \item[] Answer: \answerNA{} 
    \item[] Justification: 
    \item[] Guidelines:
    \begin{itemize}
        \item The answer \answerNA{} means that the core method development in this research does not involve LLMs as any important, original, or non-standard components.
        \item Please refer to our LLM policy in the NeurIPS handbook for what should or should not be described.
    \end{itemize}

\end{enumerate}